\documentclass[conference]{IEEEtran}

\usepackage{cite}
\usepackage{amsmath}
\usepackage{url}
\usepackage{mathptmx}
\usepackage{amsthm}
\usepackage{amssymb}
\usepackage{graphicx}
\usepackage{booktabs}
\usepackage{multirow}
\usepackage{enumitem}
\usepackage{hyperref}
\usepackage{subcaption}
\usepackage{algorithm}
\usepackage{algpseudocode}
\usepackage{framed}
\usepackage{siunitx}
\newtheorem{theorem}{Theorem}

\usepackage{tcolorbox}
\usepackage{makecell}
\usepackage{bbm}

\newcommand{\rqtakeaway}[2]{%
\vspace{0.4em}
\noindent\fbox{%
\begin{minipage}{\dimexpr\linewidth-2\fboxsep-2\fboxrule\relax}
\textbf{#1.} #2
\end{minipage}}
\vspace{0.4em}
}

\begin{document}

\title{Thermo-FL: Thermal-Aware Robust Federated Fine-Tuning of Large Language Models for Edge AI}

\author{
\IEEEauthorblockN{
Shiva Shrestha\IEEEauthorrefmark{1},
Kazi Shaharair Sharif\IEEEauthorrefmark{1},
Zongxing Xie\IEEEauthorrefmark{1},
Jiajing Huang\IEEEauthorrefmark{2},
Anhao Xiang\IEEEauthorrefmark{3},
and Honghui Xu\IEEEauthorrefmark{3}
}

\IEEEauthorblockA{
\IEEEauthorrefmark{1}
Department of Computer Science, Kennesaw State University\\
Marietta, GA 30060, USA\\
\texttt{\{sshres16,ksharif2\}@students.kennesaw.edu}\\
\texttt{zxie1@kennesaw.edu}
}

\IEEEauthorblockA{
\IEEEauthorrefmark{2}
School of Data Science and Analytics, Kennesaw State University\\
Marietta, GA 30060, USA\\
\texttt{jhuang24@kennesaw.edu}
}

\IEEEauthorblockA{
\IEEEauthorrefmark{3}
Department of Information Technology, Kennesaw State University\\
Marietta, GA 30060, USA\\
\texttt{\{axiang,hxu10\}@kennesaw.edu}
}
}

\maketitle
\begin{abstract}
Federated fine-tuning enables large language models to adapt on edge devices without centralizing private data, but practical deployments must address hardware instability and adversarial update corruption together. Thermally constrained clients may throttle, slow local training, or delay synchronous aggregation, while Byzantine clients and communication-layer adversaries can corrupt the updates used to form the global model. To address these challenges, we present Thermo-FL, a thermal-aware federated LoRA fine-tuning framework that uses device temperature as an active control signal for local adapter training and sparse update transmission. On the client side, Thermo-FL adjusts the active LoRA-layer fraction and transmitted update density as devices heat or cool, reducing workload under thermal stress. On the server side, Thermo-FL introduces TERRA, a robust aggregation pipeline for dynamically sparse LoRA updates that combines norm filtering, mask-aware directional validation, adaptive active-coordinate clipping, and mask-aware aggregation. We evaluate Thermo-FL using both a large-scale emulator and a Jetson-based physical testbed. In the emulator, Thermo-FL improves robustness under adversarial sparse aggregation and achieves the strongest BoolQ accuracy across clean and attack settings while remaining competitive on GSM8K. In the physical prototype, Thermo-FL stabilizes device temperature, reduces compressed upload size through bitmap sparse encoding, and preserves GSM8K utility under sign-flip/scale and MITM perturbations. These results show that secure edge LLM adaptation should jointly consider hardware behavior, workload regulation, sparse communication, and aggregation robustness. 
\end{abstract}

\IEEEpeerreviewmaketitle

\section{Introduction}

The increasing deployment of large language models (LLMs) has brought sophisticated language understanding and generation capabilities directly to edge devices, such as mobile phones, drones, robots, vehicles, medical systems, and industry sensors~\cite{xu2024ondevicelanguagemodelscomprehensive, xu2025surveyprivacysecuritymobile}. This shift creates a rapidly growing need to personalize and adapt the LLMs to domain-specific tasks using the data generated outside the centralized cloud server environments. For individual users, local data may capture preferences, writing style, language, and interaction patterns, and for small organizations, clinics, field teams, and industrial operators, local data may contain domain vocabulary, operational procedures, sensitive records, internal reports, and so on that are difficult to centralize because of privacy, cost, and government constraints~\cite{EdgeLLMsurvey}. This need is already visible in large-scale mobile systems, where federated learning (FL) has been used to improve the on-device keyboard and language model behaviour without collecting users' raw data text~\cite{hard2018federated}. Federated Learning offers a promising path for this setting by allowing multiple clients to adapt the shared model collaboratively without having to upload the raw data to the central server~\cite{mcmahan2023communicationefficientlearningdeepnetworks}. In particular, combined with the parameter-efficient fine-tuning (PEFT) techniques like Low-Rank Adaptation (LoRA), federated learning allows the resource-constrained devices to collaboratively adapt the LLMs without updating the entire model~\cite{hu2021loralowrankadaptationlarge}.

However, federated learning on the edge devices is not a simple scaled-down version of the cloud-based distributed training~\cite{xia2021survey}. Unlike cloud servers, edge devices operate under limited power, constrained cooling, and unstable runtime conditions, which can delay the training process, reduce client participation, and increase the communication overhead. At the same time, unlike cloud servers running in the controlled data center, edge devices are deployed in the physically open and often unmonitored environments, which may expose the federated process to adversarial threats, including but not limited to corrupted client updates and communication-layer perturbations~\cite{bagdasaryanBackdoorFL, BhagojiFLAdverserial, jimenezgutierrez2025securityprivacyfederatedlearning, ContiMITM}. These realities impose a simultaneous demand, where the system must respect both hardware limitations while still protecting the global model from unreliable or malicious updates.

Existing federated LLM fine-tuning methods address individual aspects of this problem, but they rarely treat hardware constraints and adversarial robustness as coupled design requirements. PEFT reduces the number of trainable parameters, allowing edge devices to fine-tune the LLM even in resource-constrained environments~\cite{wu2025survey, DP_fedlora}. Robust aggregation methods can mitigate the corrupted or malicious updates at the server side~\cite{xu2022byzantine, KRUM, bulyan, Median_Mean}. However, these methods are typically developed independently, where hardware conditions are treated as external runtime conditions, while robustness is treated as a server-side filtering problem over the received updates. In practice, under the edge environment, this assumption can break down. A thermally stressed client may experience throttling, delayed training, or reduced participation~\cite{LuiDVFS, HanimaiahDVFSPerformance}, while an adversarial client or communication-layer perturbation may produce abnormal updates that also appear unreliable to the server~\cite{bagdasaryanBackdoorFL, ContiMITM, BhagojiFLAdverserial}. Therefore, an edge federated fine-tuning framework must reduce the burden on resource-constrained clients while protecting the server-side aggregation from unreliable or malicious updates. This motivates the development of a framework that is hardware-aware on the client side and robust on the server-side.

To address this gap, we propose Thermo-FL, a hardware-aware and robust federated learning framework for LoRA fine-tuning of LLMs on thermally-constrained edge devices. On the client-side, Thermo-FL uses the device temperature as a control signal to regulate local LoRA adaptation and sparse update transmission. When a client becomes thermally stressed, the framework reduces computation and communication load for that client, and when the temperature remains within the safer thermal range, it can participate more fully. On the server side, Thermo-FL introduces TERRA (Thermo-Enhanced Robust Round Aggregation), a robust aggregation layer that protects the global model by applying magnitude filtering, directional validation, and adaptive clipping to received updates. Together, these client-side and server-side mechanisms allow Thermo-FL to adapt to hardware-induced irregularities while also maintaining robustness against adversarial attacks. 
To sum up, the key contributions of this paper are summarized as follows:
\begin{itemize}
    \item We propose Thermo-FL, a hardware-aware and robust framework for federated LoRA fine-tuning on thermally constrained edge devices.
    \item A temperature-driven client policy is formulated to dynamically modify LoRA training and sparse update transmission based on the device thermal conditions.
    \item TERRA is designed as a robust aggregation layer tailored to dynamically sparse LoRA updates, combining magnitude control, directional validation, and adaptive clipping to mitigate corrupted updates.
\end{itemize}

The remainder of the paper is organized as follows:  Section~\ref{sec:background} presents background on federated learning and LoRA. Section~\ref{sec:system_threat_model} defines the system and threat model. Section~\ref{sec:methodology} describes Thermo-FL and TERRA. Section~\ref{sec:eval} presents the evaluation. Section~\ref{sec:discussion} discusses implications and limitations, Section~\ref{sec:related_work} reviews related work, and Section~\ref{sec:conclusion} concludes.

\section{Preliminaries}
\label{sec:background}
In this section, we formalize the Federated Learning (FL) objective used in our system, specify how Parameter-Efficient Fine-Tuning (PEFT) is instantiated in the proposed architecture, using LoRA, and introduce the sparse-update notation used throughout this paper.

\subsection{Federated Learning Formulation}
We consider a distributed training setup consisting of a central aggregator \(\mathcal{S}\) and \(K\) edge clients. Each client \(k \in \{1, \dots, K\}\) maintains a private dataset \(\mathcal{D}_k = \{(\mathbf{x}_i, y_i)\}_{i=1}^{|\mathcal{D}_k|}\) that is never shared with the server~\cite{mcmahan2023communicationefficientlearningdeepnetworks}. The main goal of the system is to collaboratively optimize a global parameter state \(W \in \mathbb{R}^d\) that minimizes the weighted global objective
\begin{equation}
\min_{W} J(W) = \sum_{k=1}^{K} p_k F_k(W),
\qquad
\text{where }
p_k = \frac{|\mathcal{D}_k|}{\sum_{j=1}^{K} |\mathcal{D}_j|}.
\end{equation}
Here, the local loss \(F_k(W)\) denotes the empirical risk computed over \(\mathcal{D}_k\) on client \(k\), and \(p_k\) weights each client according to its local dataset size or importance.
During each communication round \(t\), the server broadcasts the current global model state \(W^{(t)}\) to all participating clients. Each client then performs local optimization, typically using stochastic gradient descent (SGD), to produce a model delta \(\Delta_k^{(t)}\). The server aggregates the received updates into \(\bar{\Delta}^{(t)}\) and updates the global model as
$
W^{(t+1)} = W^{(t)} + \bar{\Delta}^{(t)}
$, 
which is redistributed in the subsequent communication round.

\subsection{Federated LoRA Fine-Tuning}
Full-parameter fine-tuning of Large Language Models (LLMs) is often impractical on edge hardware due to memory, computation, and communication constraints.  Parameter-efficient fine-tuning (PEFT) methods reduce this burden by updating only a small set of trainable parameters instead of the full model weights~\cite{houlsby2019parameterefficienttransferlearningnlp,lester2021prompttuning,li2021prefixtuningoptimizingcontinuousprompts}. In this work, we use Low-Rank Adaptation (LoRA) as the local fine-tuning mechanism~\cite{hu2021loralowrankadaptationlarge, zhang2023lorafamemoryefficientlowrankadaptation, Kulkarni2020, loraxslowrankadaptationextremely}.
LoRA assumes that weight updates during the adaptation/training process lie in a low intrinsic subspace. For a pre-trained weight matrix $\mathbf{W}_0 \in \mathbb{R}^{d \times k}$, the updates are parameterized as a low-rank factorization $\Delta \mathbf{W} = \mathbf{BA}$, where $\mathbf{B} \in \mathbb{R}^{d \times r}$ and $\mathbf{A} \in \mathbb{R}^{r \times k}$ with rank $r \ll \min(d, k)$. In practice, this constrains adaptation to a compact subspace while preserving the representational capacity of the frozen backbone. The forward computation is therefore modified to
\begin{equation}
    \mathbf{h} = \mathbf{W}_0 \mathbf{x} + \frac{\alpha}{r} \mathbf{BA} \mathbf{x}
\end{equation}
where $\alpha$ is a scaling constant.
During local optimization, $\mathbf{W}_0$ remains fixed and only $\mathbf{A}$ and $\mathbf{B}$ receive gradient updates. 
In federated LoRA fine-tuning, each selected client \(k\) initializes from the current global LoRA state and fine-tunes the LoRA parameters on its private dataset \(\mathcal{D}_k\). After local training at round \(t\), the client produces a LoRA update \(\Delta_k^{(t)}\), which represents the change in its trainable LoRA parameters. The server aggregates the received LoRA updates into \(\bar{\Delta}^{(t)}\) and applies the result to the global LoRA state for the next communication round. This allows clients to collaboratively adapt an LLM while avoiding full-parameter training and without sharing raw local data.

\section{Hardware analysis and Threat Model}
\label{sec:system_threat_model}
In this section, we describe the hardware limitations and the adversarial framework used in this research. These constraints are important because local fine-tuning can increase device temperature and delay client updates, while adversarial clients or communication-layer perturbations can corrupt the updates used for aggregation. Therefore, our approach includes modeling two aspects of the problem: first, the hardware conditions that affect client-side training and synchronization; second, the integrity threats that affect server-side aggregation.

\subsection{Dynamic Voltage and Frequency Scaling (DVFS):}
\label{subsec:DVFS}
LoRA reduces the number of trainable parameters, but local fine-tuning still requires repeated matrix operations, optimizer updates, and memory movement on edge hardware. On the edge devices without any fan or those that have cooling constraints, sustained training can lead to thermal accumulation and runtime slowdowns~\cite{peluso2019performance, HanimaiahDVFS, GadeDVFS, HanimaiahDVFSPerformance}. Modern embedded platforms commonly rely on Dynamic Voltage and Frequency Scaling (DVFS) to reduce clock frequency when temperature approaches a critical operating region. We abstract this behavior using a simple two-state model:
\begin{equation}
    f(t) =
    \begin{cases}
    f_{\max}, & T(t) < T_{\mathrm{crit}}, \\
    f_{\mathrm{th}}, & T(t) \ge T_{\mathrm{crit}},
    \end{cases}
    \label{eq:throttle}
\end{equation}
where \(T(t)\) is the device temperature at time \(t\), \(T_{\mathrm{crit}}\) is the throttling threshold, \(f_{\max}\) is the nominal operating frequency, and \(f_{\mathrm{th}} < f_{\max}\) denotes the reduced frequency under thermal throttling.
This abstraction does not simply imply that real DVFS controllers use only two frequency states. In practice, embedded devices generally have multiple intermediate frequency levels and smooth governor policies. In Eq.~\ref{eq:throttle}, we only capture the dominant effect relevant to synchronous federated training, where once a client enters a throttled regime, its local computation time can increase substantially. In this paper, we refer to the resulting synchronization bottleneck as the thermal wall. In a synchronous FL round, the server cannot complete aggregation until the required client updates have arrived. Therefore, if client \(k\) becomes thermally throttled, its local computation time \(t_{\mathrm{comp}}^{k}\) may delay the entire round. This couples the global training throughput to the thermal state of the slowest or most constrained participating clients.

\subsection{Threat Model}
\label{subsec:threatmodel}
Federated fine-tuning of large language models on edge devices presents a significantly broader attack surface compared to the centralized training in controlled cloud environments~\cite{openprobleminFL}. Beyond the common privacy and availability of the devices concern, practical edge deployments must also contend with unreliable communication links, physically exposed hardware, and potentially adversarial participants in the federated learning rounds. In this work, we focus on integrity attacks, as they pose the most immediate threat to optimization stability, model convergence, and downstream task utility in resource-constrained federated learning systems.
We consider two integrity-oriented adversarial classes that are particularly relevant in edge federated learning: internal Byzantine clients, which manipulate updates before transmission, and external man-in-the-middle (MITM) adversaries, which tamper with updates while they are in transit. Together, these capture both endpoint-level and communication-level corruption.

\noindent \textbf{1. Internal Adversary (Byzantine Client):}
We assume that a subset of authenticated clients may deviate arbitrarily from the intended learning objective while still participating in the communication protocol~\cite{FangPosion, Async_BYzantine}. Let $\Delta_k$ denote the local model update produced by client $k$ after local training. A Byzantine client instead submits a poisoned update
\begin{equation}
    \Delta_k' = \mathcal{A}(\Delta_k),
\end{equation}
where $\mathcal{A}(\cdot)$ denotes an adversarial transformation designed to disrupt aggregation and degrade the global model.

Our threat model includes both simple and composite update-poisoning behaviors. The simplest case is a \textbf{sign-flip attack}, which reverses the update direction and amplifies its magnitude:
\begin{equation}
    A_{\mathrm{sf}}(\Delta_k) = \gamma \Delta_k, \quad \gamma < -1
\end{equation}
To model a broader and less predictable adversary, we also consider a \textbf{mixed attack} family that combines several poisoning operators studied in federated learning~\cite{FangPosion}. The purpose of this setting is to avoid evaluating robustness only against a single fixed attack pattern. Instead, each malicious client may instantiate one attack operator from a set of possible update transformations, including sign flip~\cite{Async_BYzantine}, Gaussian noise injection, scaling, random masking, model replacement~\cite{bagdasaryanBackdoorFL}, or an ALIE-like perturbation~\cite{ALIE}. Collectively, these operators span directional inversion, stochastic corruption, magnitude amplification, sparsity distortion, and distribution-aware poisoning. Their shared goal is to bias the aggregated update, destabilize convergence, and reduce the quality of the final global model.

\noindent \textbf{2. External Adversary (Man-in-the-Middle(MITM)).}
We also consider an external adversary located on the communication path between clients and the server, for example, over unsecured wireless or public network links~\cite{ContiMITM}.
Unlike a Byzantine participant, this adversary does not originate legitimate client updates, but instead intercepts the legitimate client updates and perturbs them in transit before they reach the server. We model this \textbf{MITM attack} as
\begin{equation}
    \Delta_k^{\mathrm{mitm}} = \Delta_k + z,
\end{equation}
where $z$ denotes an adversarial perturbation introduced during transmission. This formulation captures communication-layer corruption such as packet tampering, interference, or deliberate noise injection applied to otherwise benign client updates.

\subsection{System Assumptions}
The following assumptions are made throughout this paper.
(1) The central server is \textit{honest-but-curious}: it executes the protocol faithfully but could check the received updates. Fully malicious servers are not considered.
(2) Each client has a distinct identity and does not pretend to be other clients.
(3) The physical and thermal properties of all participants, even those who launch attacks, are assumed to fall within the bounds described in Section~\ref{subsec:DVFS}.
(4) The temperature readings come from hardware components on each device and thus correspond to the true state of the device.
The proposed threat model directly motivates the design of Thermo-FL. Edge deployments require defenses against adversarial updates to protect aggregation integrity, but these defenses must remain lightweight because thermal and energy budgets are constrained. Cryptographic mechanisms such as selective homomorphic encryption can protect update confidentiality, but they introduce additional system overhead and are orthogonal to our aggregation-layer robustness goal~\cite{maskcrypt}. Thermo-FL therefore uses a lightweight hardware-software co-design: client-side temperature-aware workload regulation reduces stress on constrained devices, while server-side robust aggregation limits the influence of unreliable or malicious updates.

\begin{figure*}[t]
  \centering
  \includegraphics[width=.95\textwidth]{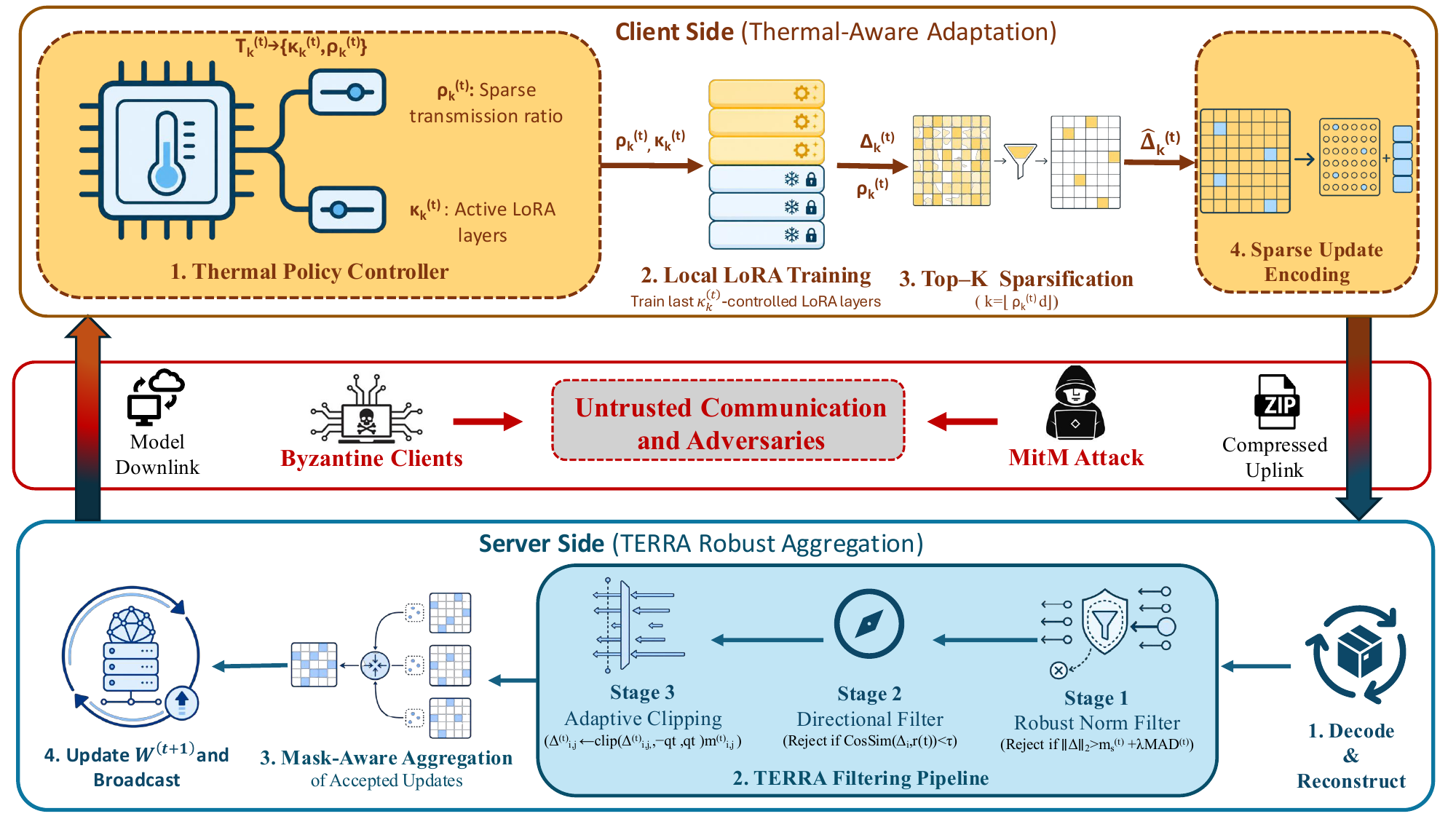}
  \caption{Overview of Thermo-FL architecture}
  \label{fig:architecture}
\end{figure*}
\section{Thermo-FL}
\label{sec:methodology}

Thermo-FL is a round-based federated fine-tuning framework architected around a key observation: in edge environments, the thermal limits, communication constraints, and adversarial updates are not independent system concerns; rather, they are tightly coupled together. Existing Federated Learning paradigms often treat hardware-side limitations as external constraints on training. In contrast, Thermo-FL converts this client's thermal state into the first-class control signal within the learning protocol itself. In each round, a client measures its local temperature and uses this signal to regulate the number of LoRA layers that remain trainable and the fraction of update coordinates transmitted to the server. The server then applies the TERRA pipeline to filter and aggregate these dynamically sparse LoRA updates, closing the loop between thermal feedback, sparse communication, and Byzantine-resilient aggregation.

\subsection{Design Overview and Control Loop}
\label{subsec:design_overview}

Fig.~\ref{fig:architecture} illustrates the end-to-end workflow of the Thermo-FL architecture across the three major components, client, communication, and server. At the start of each round, the server broadcasts the current global LoRA parameters denoted by \(W^{(t)}\) to all selected clients in the FL loop. While the pre-trained backbone of the model remains frozen, \(W^{(t)}\) represents the set of trainable LoRA adapter parameters attached to the model. On the client side, each of the selected clients measures its local temperature denoted by \(T_k^{(t)}\), then they use this measurement to determine the active LoRA-layer fraction, \(\kappa_k^{(t)}\) and sparse transmission keep ratio, \(\rho_k^{(t)}\), and perform thermally regulated local fine-tuning and computes a local LoRA update \(\Delta_k^{(t)}\), which represents the difference between the locally adapted LoRA parameters and the received global LoRA parameters. The resulting LoRA update obtained after the difference is then sparsified, encoded, and transmitted to the server. On the server side, we use the TERRA pipeline to screen out any abnormal updates, validate their direction, clip retained coordinates, and perform mask-aware aggregation to produce the global aggregate update denoted by \(\bar{\Delta}^{(t)}\). The server applies this aggregate update to obtain \(W^{(t+1)}\), which is then used to begin the next communication round.
The round-level control loop can be summarized as
\begin{equation}
\label{eq:thermofl_control_loop}
\begin{aligned}
W^{(t)}
&\rightarrow
\{T_k^{(t)}\}_{k\in S_t}
\rightarrow
\{(\kappa_k^{(t)},\rho_k^{(t)})\}_{k\in S_t} \\
&\rightarrow
\{\Delta_k^{(t)}\}_{k\in S_t}
\rightarrow
\{z_k^{(t)}\}_{k\in S_t}
\rightarrow
U^{(t)} \\
&\rightarrow
\mathcal{F}_{\mathrm{TERRA}}(U^{(t)})
\rightarrow
\bar{\Delta}^{(t)}
\rightarrow
W^{(t+1)} .
\end{aligned}
\end{equation}
where, \(S_t\) denotes the set of clients selected in communication round \(t\), and \(T_k^{(t)}\) is the measured device temperature of client \(k\). The two temperature-controlled parameters play different roles: \(\kappa_k^{(t)}\in(0,1]\) controls the fraction of LoRA layers that remain trainable during local fine-tuning, whereas \(\rho_k^{(t)}\in(0,1]\) controls the fraction of update coordinates retained for transmission. Each client computes a local LoRA update difference \(\Delta_k^{(t)}\) relative to the received global LoRA parameters, sparsifies and encodes this difference into a transmitted payload \(z_k^{(t)}\), and sends the payload to the server. After decoding, the server obtains the sparse update set \(U^{(t)}=\{u_k^{(t)}:k\in S_t\}\). TERRA, denoted by \(\mathcal{F}_{\mathrm{TERRA}}(\cdot)\), then operates on this decoded sparse update set and produces the aggregate update \(\bar{\Delta}^{(t)}\). The server then applies this aggregate difference to the current global LoRA parameters, yielding the next round global LoRA weight which is denoted by\(W^{(t+1)}=W^{(t)}+\bar{\Delta}^{(t)}\).

The two client-side control parameters serve different roles. The active layer fraction \(\kappa_k^{(t)}\) reduces local backpropagation workload by limiting how many LoRA layers are updated on client \(k\). The transmission keep ratio \(\rho_k^{(t)}\) reduces uplink payload size by limiting how many update coordinates are retained after local training. Both parameters are derived from the measured thermal state, but they act on different stages of the round: \(\kappa_k^{(t)}\) affects local optimization, whereas \(\rho_k^{(t)}\) affects communication.
This design creates a negative feedback loop between device temperature and learning workload. When a client heats up, the protocol lowers the amount of local computation and communication assigned to that client. When the client operates below the thermal threshold, it can participate with a larger active LoRA subset and a denser transmitted update. Thermo-FL does not require an exact thermodynamic model of each device. Instead, it uses measured temperature as a practical feedback signal for adapting the training and transmission behavior of heterogeneous edge clients.

The following subsections describe the two main components of this control loop. Section~\ref{subsec:client_policy} presents the client-side thermal policy, including the selection of trainable LoRA layers and the temperature-driven sparse transmission mechanism. Section~\ref{subsec:TERRA} presents TERRA, the server-side robust aggregation pipeline for filtering and aggregating dynamically sparse LoRA updates.

\subsection{Client-Side Thermal-Adaptive Training and Sparse Transmission}
\label{subsec:client_policy}
The Thermo-FL architecture's client-side component has two coupled mechanisms. First of all, each client measures its device temperature, which determines the fraction of LoRA layers that remain trainable during local fine-tuning. Second, the same thermal signal determines the fraction of update coordinates retained for transmission after training. We describe these mechanisms in the following subsections.

\noindent
\paragraph{Thermal-adaptive LoRA training}
Here, Let \(T_k^{(t)}\) denote the recorded temperature of client \(k\) at round \(t\). Thermo-FL then maps this temperature to an active LoRA-layer fraction denoted by \(\kappa_k^{(t)}\), which controls the portion of the LoRA layers that participate in backpropagation:
\begin{equation}
\label{eq:kappa_policy}
\kappa_k^{(t)} =
\begin{cases}
1.0, & T_k^{(t)} < T_{\mathrm{low}},\\
\kappa_{\mathrm{mid}}, & T_{\mathrm{low}} \leq T_k^{(t)} \leq T_{\mathrm{high}},\\
\kappa_{\mathrm{crit}}, & T_k^{(t)} > T_{\mathrm{high}},
\end{cases}
\end{equation}
where \(0 < \kappa_{\mathrm{crit}} \leq \kappa_{\mathrm{mid}} < 1\) are throttling constants. Only the last \(\kappa_k^{(t)}\) fraction of LoRA layers remains trainable during local fine-tuning. When the device operates below the lower thermal threshold \(T_{\mathrm{low}}\), all LoRA layers are trained. In the intermediate thermal region, the Thermo-FL framework freezes a portion of the adapter and updates only the last \(\kappa_{\mathrm{mid}}\) fraction of LoRA layers. Under high thermal stress, training is restricted to the last \(\kappa_{\mathrm{crit}}\) fraction. This discrete control policy avoids frequent changes in the trainable layer set while still reducing the local backpropagation workload as the device approaches a higher operating temperature.

Let \(M^{(t)}_k\) denote the layer-freezing mask induced by \(\kappa^{(t)}_k\). 
For notational simplicity, we write the masked local optimization in one-step form; in implementation, the same mask is applied throughout the configured local optimizer steps. 
The local LoRA update is then restricted to the active layers:
\begin{equation}
W^{(t)}_{k,\mathrm{local}} =
W^{(t)} - \eta M^{(t)}_k \nabla F_k(W^{(t)}),
\end{equation}
%
where, \(\eta\) is the learning rate and \(F_k\) is the local objective of client \(k\). The mask \(M_k^{(t)}\) applies only to the trainable LoRA adapter parameters, while the pre-trained backbone model weight remains frozen. After local fine-tuning, the client constructs the LoRA update difference as:
$
\label{eq:client_delta}
\Delta_k^{(t)}
=
W_{k,\mathrm{local}}^{(t)}
-
W^{(t)} .
$
Thus, \(\Delta_k^{(t)}\) captures the effect of thermally regulated local optimization during round \(t\). Thermo-FL transmits a sparsified representation of this update difference rather than the full locally adapted LoRA parameters.
\paragraph{Temperature-driven sparse transmission}
After local fine-tuning, Thermo-FL applies sparse transmission to the LoRA update difference \(\Delta_k^{(t)}\) obtained from the thermal-adaptive LoRA training. This stage is controlled by the sparse transmission ratio, also called the keep ratio \(\rho_k^{(t)}\), which determines the fraction of update coordinates retained for upload. Unlike \(M_k^{(t)}\), which is a layer-freezing mask used during local optimization, the sparse transmission mask is denoted by \(m_k^{(t)}\) and operates over update coordinates after the fine-tuning takes place.
Thermo-FL maps the recorded temperature, $T_k^{(t)}$  to the transmission keep ratio as
\begin{equation}
\label{eq:rho_policy}
\rho_k^{(t)}
=
\operatorname{clip}
\left(
b - \omega T_k^{(t)},
\rho_{\min},
\rho_{\max}
\right),
\qquad \omega > 0 ,
\end{equation}
where \(0 < \rho_{\min} \leq \rho_{\max} \leq 1\). This policy makes the retained update fraction decrease with an increase in temperature: cooler clients may transmit denser updates, while thermally stressed clients transmit fewer coordinates. Given \(d\) LoRA update coordinates, the client retains
$
\label{eq:topk_size}
s_k^{(t)}
=
\left\lfloor \rho_k^{(t)} d \right\rfloor
$
coordinates with the largest magnitudes in \(\Delta_k^{(t)}\). The resulting support mask is
\begin{equation}
\label{eq:sparse_mask}
m_{k,j}^{(t)}
=
\mathbbm{1}
\left[
j \in \operatorname{TopK}
\left(
|\Delta_k^{(t)}|,
s_k^{(t)}
\right)
\right],
\end{equation}
where \(m_{k,j}^{(t)}=1\) denotes that the coordinate \(j\) is retained for transmission. The sparse update before encoding can therefore be given as:
$
\label{eq:sparse_delta}
\tilde{\Delta}_k^{(t)}
=
m_k^{(t)} \odot \Delta_k^{(t)} .
$
The client then encodes its sparse update as a compact payload
$
\label{eq:encoded_sparse_payload}
z_k^{(t)}
=
\operatorname{Encode}
\left(
\tilde{\Delta}_k^{(t)}, m_k^{(t)}
\right),
$
where the support \(m_k^{(t)}\) is represented using a bitmap and the retained values are serialized and compressed before upload. In our implementation, the serialized sparse payload is compressed using zlib. This representation is well-suited to dynamically sparse LoRA updates because it preserves the coordinate support needed by the server while avoiding transmission of the full dense update vector, thus reducing the overall payload from the edge devices.
On the server side, it receives the encoded payloads \(\{z_k^{(t)}\}_{k\in S_t}\), decodes them into sparse update-mask pairs, and applies TERRA to aggregate the reconstructed updates robustly, as described in the following section.
\subsection{Server-Side Robust Aggregation (TERRA)}
\label{subsec:TERRA}
After receiving the encoded sparse payloads, the server then decodes each payload into a reconstructed sparse update, $\tilde{\Delta}_k^{(t)}$ and its support mask, $m_k^{(t)}$. We denote the decoded update from client \(k\) as
$
u_k^{(t)}
=
(\tilde{\Delta}_k^{(t)}, m_k^{(t)}),
$
where \(\tilde{\Delta}_k^{(t)}\) is the reconstructed sparse LoRA update and \(m_k^{(t)}\in\{0,1\}^d\) is the corresponding binary support mask. We denote the decoded update set used by TERRA as
$
U^{(t)}
=
\{u_k^{(t)}:k\in S_t\}.
$
For norm and direction checks, we flatten each sparse update into a single vector across all LoRA tensors. The support mask is retained for coordinate-wise clipping and mask-aware aggregation.

TERRA follows the broader robust-aggregation principle of screening client updates before aggregation. Prior Byzantine-robust FL work has shown that magnitude, sign, and similarity statistics can help identify malicious gradients before averaging~\cite{xu2022byzantine}. We adapt this principle to dynamically sparse LoRA updates by applying norm filtering and directional validation to decoded sparse updates while preserving their support masks in the TERRA pipeline. For clipping, we follow the motivation behind adaptive clipping methods, where clipping thresholds are derived from the update distribution rather than fixed a priori~\cite{thakkar2019differentially}. Unlike DP-focused adaptive clipping, TERRA uses clipping as a robustness mechanism for sparse LoRA aggregation and applies it only over active coordinates.
We express the pipeline as
\begin{equation}
\label{eq:terra_pipeline}
\mathcal{F}_{\mathrm{TERRA}}
=
\mathcal{F}_{\mathrm{agg}}
\circ
\mathcal{F}_{\mathrm{clip}}
\circ
\mathcal{F}_{\mathrm{dir}}
\circ
\mathcal{F}_{\mathrm{norm}} .
\end{equation}
where, \(\mathcal{F}_{\mathrm{norm}}\) performs robust norm filtering, \(\mathcal{F}_{\mathrm{dir}}\) performs directional validation, \(\mathcal{F}_{\mathrm{clip}}\) applies adaptive coordinate-wise clipping on active coordinates, and \(\mathcal{F}_{\mathrm{agg}}\) performs mask-aware aggregation. The output of this pipeline is the aggregate update
$
\label{eq:terra_output}
\bar{\Delta}^{(t)}
=
\mathcal{F}_{\mathrm{TERRA}}(U^{(t)}),
$
which is applied to the global LoRA parameters as \(W^{(t+1)}=W^{(t)}+\bar{\Delta}^{(t)}\).

\paragraph{Robust norm filtering}
The first stage of the TERRA pipeline, denoted by \(\mathcal{F}_{\mathrm{norm}}\), enforces a round-adaptive magnitude constraint on the decoded sparse updates. For each reconstructed update \(u_k^{(t)}=(\tilde{\Delta}_k^{(t)},m_k^{(t)})\), the server computes the \(\ell_2\)-norm of the sparse LoRA update:
$
s_k^{(t)}
=
\|\tilde{\Delta}_k^{(t)}\|_2 .
$
Let
$
m_s^{(t)}
=
\operatorname{median}
\left(
\{s_k^{(t)}:k\in S_t\}
\right)
$
denote the median update norm in round \(t\), and let
$
\operatorname{MAD}^{(t)}
=
\operatorname{median}
\left(
\{|s_k^{(t)}-m_s^{(t)}|:k\in S_t\}
\right)
$
denote the corresponding median absolute deviation. We have designed TERRA to retain client \(k\) after norm filtering only if
\begin{equation}
\label{eq:terra_norm_filter}
s_k^{(t)}
\leq
m_s^{(t)}
+
\lambda_{\mathrm{norm}}\operatorname{MAD}^{(t)},
\end{equation}
where \(\lambda_{\mathrm{norm}}\) controls the tolerance of the magnitude filter. The resulting norm-valid client set is
\begin{equation}
\label{eq:terra_norm_valid_set}
S_{\mathrm{norm}}^{(t)}
=
\left\{
k\in S_t:
s_k^{(t)}
\leq
m_s^{(t)}
+
\lambda_{\mathrm{norm}}\operatorname{MAD}^{(t)}
\right\}.
\end{equation}
This stage is designed to remove updates whose magnitudes are unusually large relative to the other sparse LoRA updates received in the same round by the server. Because the threshold is derived from the round-level norm distribution, it adapts to changes in update scale across training rather than relying on a fixed global cutoff. This is particularly effective against scaling and model-replacement attacks, where an adversarial client attempts to dominate the aggregate by submitting an update with excessive norm.

\paragraph{Mask-aware directional validation}
The second stage of TERRA, denoted by \(\mathcal{F}_{\mathrm{dir}}\), applies directional consistency with the recent global update trajectory. The first stage, the norm filter, removes extreme-magnitude updates, but it does not guarantee that all retained updates point in a useful direction. An adversarial client may still submit an update with a plausible magnitude but harmful orientation. To address this case, using the second stage of TERRA, we compare each norm-valid update against a server-maintained reference direction \(r^{(t)}\), while respecting the sparse support transmitted by that client.
The server maintains the reference direction as an exponential moving average of previously accepted aggregate updates:
\begin{equation}
\label{eq:terra_reference_direction}
r^{(t)}
=
\beta r^{(t-1)}
+
(1-\beta)\bar{\Delta}^{(t-1)},
\end{equation}
where \(\beta\in[0,1)\) is the momentum coefficient that controls the memory of the reference direction, and \(\bar{\Delta}^{(t-1)}\) is the aggregate update applied in the previous round. For each \(k\in S_{\mathrm{norm}}^{(t)}\), TERRA computes cosine similarity only over the coordinates transmitted by that client. Specifically, for each \(k\in S_{\mathrm{norm}}^{(t)}\), TERRA computes the mask-aware cosine score \(c_k^{(t)}\) by projecting the reference direction onto the support mask \(m_k^{(t)}\):
\begin{equation}
\label{eq:cosine_sim}
c_k^{(t)}
=
\frac{
\left\langle
\tilde{\Delta}_k^{(t)},
m_k^{(t)} \odot r^{(t)}
\right\rangle
}{
\|\tilde{\Delta}_k^{(t)}\|_2
\,
\|m_k^{(t)} \odot r^{(t)}\|_2
+
\epsilon
}.
\end{equation}

Here, \(\epsilon>0\) prevents division by zero. This score measures the alignment between the reconstructed sparse update and the recent aggregate trajectory over only the coordinates transmitted by client \(k\) which avoids penalizing a sparse client for coordinates it did not transmit and that should not be interpreted as explicit zeros. TERRA retains client \(k\) after directional validation only if
\begin{equation}
\label{eq:terra_direction_valid_set}
S_{\mathrm{dir}}^{(t)}
=
\left\{
k\in S_{\mathrm{norm}}^{(t)}:
c_k^{(t)} \geq \tau_{\mathrm{dir}}
\right\}.
\end{equation}
where, \(\tau_{\mathrm{dir}}\) is the directional similarity threshold. Low or negative alignment indicates that the update is inconsistent with the recent optimization trajectory and may correspond to sign-flipping or poisoning behavior. For the first round, when no historical reference is available, TERRA initializes \(r^{(t)}\) from the aggregate of norm-valid updates or skips directional validation for that round.

\paragraph{Adaptive active-coordinate clipping}
The third stage of TERRA, denoted by \(\mathcal{F}_{\mathrm{clip}}\), bounds the coordinate-level influence of updates that passes through norm filtering and directional validation. Let \(S_{\mathrm{dir}}^{(t)}\) denote the set of clients retained after the first two stages.  For a reconstructed sparse update \(\tilde{\Delta}_k^{(t)}\), we write \(\tilde{\Delta}_{k,j}^{(t)}\) for its \(j\)-th coordinate and \(m_{k,j}^{(t)}\) for the corresponding support indicator. Rather than using a fixed clipping radius, TERRA first collects the magnitudes of all active coordinates among the retained sparse updates which is denoted by $\mathcal{C}^{(t)}$ and defined as
\begin{equation}
\mathcal{C}^{(t)}
=
\left\{
\left|\tilde{\Delta}_{k,j}^{(t)}\right|
:
k\in S_{\mathrm{dir}}^{(t)},\;
m_{k,j}^{(t)}=1
\right\}.
\end{equation}
The clipping threshold is then defined as \(q_t=\operatorname{Quantile}_{q}(\mathcal{C}^{(t)})\), where \(q\) is a fixed quantile hyperparameter. For each retained client \(k\in S_{\mathrm{dir}}^{(t)}\), this stage of TERRA clips only the active coordinates as

\begin{equation}
\label{eq:terra_active_coordinate_clipping}
\hat{\Delta}_{k,j}^{(t)}
=
m_{k,j}^{(t)}
\operatorname{clip}
\left(
\tilde{\Delta}_{k,j}^{(t)},
-q_t,
q_t
\right).
\end{equation}
Let \(\hat{\Delta}_k^{(t)}\) denote the clipped sparse update vector obtained after the three stages of TERRA filtering. This operation limits the influence of extreme individual coordinates while preserving the sparse support structure of each client update. Since the threshold is computed only from active coordinates, TERRA avoids mixing transmitted values with missing coordinates that were never uploaded by the client.

\paragraph{Mask-aware aggregation}
The final stage, \(\mathcal{F}_{\mathrm{agg}}\), is responsible for aggregating the clipped sparse updates without treating missing coordinates as zeros. For each coordinate \(j\), we compute
\begin{equation}
\label{eq:terra_mask_aware_aggregation}
\bar{\Delta}_j^{(t)}
=
\frac{
\sum_{k\in S_{\mathrm{dir}}^{(t)}}
w_k\, m_{k,j}^{(t)} \hat{\Delta}_{k,j}^{(t)}
}{
\sum_{k\in S_{\mathrm{dir}}^{(t)}}
w_k\, m_{k,j}^{(t)}
+
\epsilon
},
\end{equation}
where \(w_k\) is the aggregation weight of client \(k\), typically proportional to its local sample count, and \(\epsilon>0\) prevents division by zero when no retained client transmits coordinate \(j\). This mask-aware form is important in the sparse setting because an absent coordinate indicates missing support rather than evidence that the client intended to submit a zero value. The server then applies the aggregate update to the global LoRA parameters as
$
W^{(t+1)}
=
W^{(t)}
+
\bar{\Delta}^{(t)}.
$
Finally, the reference direction used for the next round is updated as
$
r^{(t+1)}
=
\beta r^{(t)}
+
(1-\beta)\bar{\Delta}^{(t)},
$
where \(\beta\in[0,1)\) is the momentum coefficient. To sum it up, TERRA combines 3 stages of norm filtering, mask-aware directional validation, and adaptive active-coordinate clipping, followed by mask-aware aggregation to limit abnormal update influence while preserving useful sparse learning signal from retained clients.

\subsection{Robustness Analysis}
\label{subsec:security_rationale}


In this section, we establish a design-level robustness rationale for TERRA pipeline under the threat model introduced in the Section~\ref{subsec:threatmodel}. Rather than treating the pipeline as a heuristic sequence of the filters, we show how each of the stages of the TERRA impose the concrete constraint on the adversarial update space. Robust norm filtering limits excessive update magnitude, mask-aware directional validation rejects updates that are misaligned with the recent aggregate trajectory, adaptive active-coordinate clipping bounds the coordinate-level influence of retained updates, and mask-aware aggregation prevents missing sparse coordinates from being interpreted as explicit zeros. We do not intended to overview this analysis as a full convergence or cryptographic security proof; instead, it formalizes the bounded-influence behavior that TERRA provides for dynamically sparse LoRA aggregation.

\paragraph{Magnitude-bounded update influence}
Let us consider \(A_t\subseteq S_t\) be the adversarial clients selected in round \(t\), and let \(H_t=S_t\setminus A_t\) denote the honest clients in the same round \(t\). The first TERRA stage retains only clients in \(S_{\mathrm{norm}}^{(t)}\). By construction, any retained update satisfies the condition:
$
\|\tilde{\Delta}_k^{(t)}\|_2
\leq
B_t, 
$
where
$
B_t
=
m_s^{(t)}
+
\lambda_{\mathrm{norm}}\operatorname{MAD}^{(t)} .
$
Thus, any adversarial update whose norm exceeds the round-adaptive bound \(B_t\) is rejected before aggregation. For example, under a scaling or model-replacement attack where an adversary submits \(\tilde{\Delta}_a^{(t)}=\gamma \Delta_a^{(t)}\), the update is removed whenever
\begin{equation}
        |\gamma|\,\|\Delta_a^{(t)}\|_2 > B_t ,
\end{equation}

This does not imply that all malicious updates are eliminated. Rather, it prevents high-energy adversarial updates from directly dominating the aggregate through excessive norm, provided the round-level robust statistics are not themselves dominated by adversarial clients.

\paragraph{Directionally inconsistent updates}
An adversarial update which passes the norm filter may still pointing in a harmful direction. TERRA therefore applies mask-aware directional validation over the support transmitted by each client. For each \(k\in S_{\mathrm{norm}}^{(t)}\), the alignment score as given in the eq.~\ref{eq:cosine_sim} is 
\[
c_k^{(t)}
=
\frac{
\left\langle
\tilde{\Delta}_k^{(t)},
m_k^{(t)}\odot r^{(t)}
\right\rangle
}{
\|\tilde{\Delta}_k^{(t)}\|_2
\,
\|m_k^{(t)}\odot r^{(t)}\|_2
+
\epsilon
}.
\]
The update is retained if and only if \(c_k^{(t)}\geq \tau_{\mathrm{dir}}\). This filter is scale invariant and thus complements norm filtering, as it does not require all retained updates to have similar magnitudes, but it requires the transmitted coordinates to align with the recent accepted update trajectory. 
To understand the effect of this layer on sign-flip attacks, let us consider an honest update \(\Delta_h^{(t)}\) with positive alignment score \(c_h^{(t)}\). If an adversary submits a sign-flipped version \(\tilde{\Delta}_a^{(t)}=-\gamma \Delta_h^{(t)}\) with \(\gamma>0\), then its alignment score \(c_a^{(t)}\) reverses sign on the same active support, giving \(c_a^{(t)}\approx -c_h^{(t)}\). Such updates are rejected whenever \(c_a^{(t)}<\tau_{\mathrm{dir}}\). Thus, mask-aware directional validation of TERRA constrains all of the attacks that preserve plausible magnitude but distort the update direction.

\paragraph{Coordinate-level bounded influence}
Even after the first two filtering stages of norm and direction filtering, a retained adversarial update may place large values in a small number of coordinates. We limit this effect through the third stage of TERRA, adaptive active-coordinate clipping. For each retained client \(k\in S_{\mathrm{dir}}^{(t)}\), the clipped coordinate satisfies the condition
$
\hat{\Delta}_{k,j}^{(t)}
=
m_{k,j}^{(t)}
\operatorname{clip}
\left(
\tilde{\Delta}_{k,j}^{(t)},
-q_t,
q_t
\right),
|\hat{\Delta}_{k,j}^{(t)}|
\leq
m_{k,j}^{(t)}q_t .
$
Therefore, any update that survives the first two stages has bounded coordinate-level influence. If each client transmits at most \(\rho_{\max}d\) coordinates, then the clipped sparse update also satisfies
$
\|\hat{\Delta}_k^{(t)}\|_2
\leq
q_t\sqrt{\rho_{\max}d}.
$
This bound is useful against attacks that concentrate perturbations into a small set of coordinates. The clipping stage does not certify that a retained update is honest; it ensures that the coordinate-wise contribution of any retained update is bounded before the aggregation.

\paragraph{Bounded adversarial contribution after aggregation}
The final stage of TERRA is the mask aware aggregation. For coordinate \(j\), TERRA aggregates only over clients that transmitted that coordinate, which is give in eq.~\ref{eq:terra_mask_aware_aggregation} is:
\[
\bar{\Delta}_j^{(t)}
=
\frac{
\sum_{k\in S_{\mathrm{dir}}^{(t)}}
w_k m_{k,j}^{(t)}\hat{\Delta}_{k,j}^{(t)}
}{
\sum_{k\in S_{\mathrm{dir}}^{(t)}}
w_k m_{k,j}^{(t)}
+
\epsilon
}.
\]
Let
\(
D_j^{(t)}
=
\sum_{k\in S_{\mathrm{dir}}^{(t)}}
w_k m_{k,j}^{(t)}
+
\epsilon
\)
be the active aggregation mass for coordinate \(j\), and let
\(
A_j^{(t)}
=
\{a\in A_t\cap S_{\mathrm{dir}}^{(t)}:m_{a,j}^{(t)}=1\}
\)
be the retained adversarial clients that transmitted coordinate \(j\). As described in above subsection, clipping gives \(|\hat{\Delta}_{a,j}^{(t)}|\leq q_t\), the adversarial contribution to coordinate \(j\) is bounded by
\[
\left|
\frac{
\sum_{a\in A_j^{(t)}}
w_a m_{a,j}^{(t)}\hat{\Delta}_{a,j}^{(t)}
}{
D_j^{(t)}
}
\right|
\leq
\frac{
\sum_{a\in A_j^{(t)}} w_a
}{
D_j^{(t)}
}
q_t .
\]
Thus, any retained adversarial influence is controlled by both the clipping threshold \(q_t\) as well as the adversarial active aggregation weight on that coordinate. This is simply a bounded-influence statement, not a claim that every adversarial update is removed.
\paragraph{Sparse-support manipulation}
Sparse LoRA updates introduce an additional attack surface because clients may transmit different coordinate supports based on the sparse transmission ratio denoted by, $\rho_k^{(t)}$. If all of the missing coordinates were treated as zeros, coordinates transmitted by fewer clients would have been artificially shrunken toward zero, and thus support-manipulation attacks could distort the aggregate. But, TERRA avoids this kind of manipulation by normalizing each coordinate only over the clients that actually transmit that coordinate. In other words, a missing coordinate is treated as missing support, not as an explicit zero-valued updates. This distinction is essential for dynamically sparse updates, where the support pattern is shaped by top-\(k\) transmission and client thermal state.

To sum it up, we have designed the TERRA pipeline to limit the adversarial influence through a sequence of complementary constraints. First of all, Norm filtering removes extreme-magnitude updates, followed by mask-aware directional validation reducing sign-flipped or inconsistent updates. Then, adaptive clipping bounds the coordinate-level effect of retained updates, and finally, mask-aware aggregation prevents sparse missingness from being interpreted as zero evidence. These mechanisms do not replace cryptographic guarantees or fully eliminate Byzantine behavior, but they provide a robust aggregation layer tailored to the sparse LoRA updates produced by Thermo-FL. Appendix~\ref{app:convergence} further provides a convergence rationale under an abstract TERRA operator, showing how bounded influence, alignment, and second-moment control lead to a stationary-point bound under explicit assumptions.

\section{Evaluation}
\label{sec:eval}
We evaluated Thermo-FL framework under two different yet complementary environments. In the first environment, the large-scale emulator was utilized to investigate the robustness of the controlled federated learning in the presence of large number of clients, non-IID distribution of the data, LoRA transmissions in their dense or sparse form, and under adversarial updates. On the other hand, the physical testbed was employed to verify the effectiveness of the entire pipeline under real-world limitations in terms of temperature, runtime variation, compressed payload size, and defense against attacks. Together, these settings separated controlled robustness analysis from physical deployment feasibility.

\noindent\textbf{Evaluation goals:}
Our evaluation was organized around the following research questions:

\noindent\textbf{RQ1: Robustness.}
Does the TERRA pipeline improve the robustness against sign flip and mixed Byzantine attacks when compared with dense and sparse FedAvg-LoRA, and other robust aggregation baselines?

\noindent\textbf{RQ2: Sparse aggregation.}
How is task utility affected when LoRA updates are transmitted and aggregated in dynamically sparse form rather than as full-density LoRA deltas?

\noindent\textbf{RQ3: TERRA components.}
Which stages of TERRA are necessary for robust sparse aggregation, and how much do norm filtering, directional validation, adaptive clipping, and mask-aware aggregation contribute to the final behavior?

\noindent\textbf{RQ4: Physical feasibility.}
Can Thermo-FL be executed end-to-end on real edge devices while capturing thermal behavior, runtime overhead, communication cost, and attack response?
\begin{table*}[t]
\centering
\caption{Final exact-match accuracy (\%) at global round 25 in the emulator using Qwen2.5-0.5B. Higher is better.}
\label{tab:round25_emulator_qwen05b}
\small
\setlength{\tabcolsep}{12pt}
\begin{tabular}{lcccccc}
\toprule
& \multicolumn{3}{c}{GSM8K} & \multicolumn{3}{c}{BoolQ} \\
\cmidrule(lr){2-4} \cmidrule(lr){5-7}
Method & Clean & Signflip & Mixed & Clean & Signflip & Mixed \\
\midrule
Qwen-ZS                & 27.44 & --    & --    & 62.88 & --    & --    \\
\midrule
FedAvg (dense)              & 33.21 & 24.64 & 24.56 & 69.30 & 63.39 & 58.10 \\
Trimmed Mean (dense)        & 32.52 & 28.96 & 32.30 & 69.08 & 68.59 & 59.54 \\
Coordinate Median (dense)   & 30.93 & 28.66 & \textbf{33.06} & 68.75 & 68.59 & 64.92 \\
Krum (dense)                & 24.11 & 10.16 & 20.92 & 68.83 & 66.79 & 60.55 \\
Multi-Krum (dense)          & \textbf{34.27} & 30.86 & 31.39 & 67.06 & 69.24 & 69.85 \\
Bulyan (dense)              & 31.61 & 28.81 & 30.71 & 68.90 & 68.38 & 60.46 \\
\midrule
FedAvg (sparse)            & 32.90 & 26.69 & 29.57 & 69.39 & 63.46 & 62.63 \\
Trimmed Mean (sparse)      & 32.75 & 31.39 & 28.58 & 68.99 & 68.13 & 67.58 \\
Coordinate Median (sparse) & 34.19 & 30.10 & 30.71 & 69.36 & 66.91 & 66.15 \\
Krum (sparse)              & 30.43 & 18.57 & 30.63 & 68.20 & 67.25 & 60.06 \\
Multi-Krum (sparse)        & 32.22 & \textbf{32.83} & 31.39 & 69.24 & 69.11 & 68.06 \\
Bulyan (sparse)            & 30.93 & 30.93 & 28.81 & 69.27 & 67.92 & 67.80 \\
Thermo-FL (TERRA)          & 33.74 & 31.09 & 31.67 & \textbf{72.32} & \textbf{71.16} & \textbf{72.11} \\
\bottomrule
\end{tabular}
\end{table*}
\subsection{Large-Scale Emulator}
\label{subsec:large_scale_emulator}
A large-scale emulator was used to evaluate the performance of Thermo-FL in controlled federated learning environments that are difficult to emulate physically. This setup enabled the analysis of larger client population sizes, non-IID data distribution, sparse LoRA communication, malicious updates, and temperature-aware client control under repeatable conditions.

\subsubsection{Experimental Setup}
\label{subsubsec:emulator_setup}
We configured the emulator experiment to ensure the model, federated training approach, sparsity technique, attack configuration, and evaluation process was consistent across all methods.

\noindent\textbf{Implementation and tasks.}
The implementation of the emulator involved using PyTorch, HuggingFace transformers, PEFT/LoRA, and torch.distributed. Qwen2.5-0.5B~\cite{qwen2025qwen25technicalreport} was used as the base model for this study, and it was tested on GSM8K~\cite{cobbe2021trainingverifierssolvemath} and BoolQ~\cite{clark-etal-2019-boolq} datasets. GSM8K measured numerical reasoning and arithmetic answer extraction ability, whereas BoolQ tested binary question-answer ability. The exact match was chosen as the utility metric.

\noindent\textbf{Federated protocol.}
Training data were partitioned using a non-IID Dirichlet split with concentration parameter \(\alpha=0.3\). For the round-25 experiments in Table~\ref{tab:round25_emulator_qwen05b}, we emulated 80 logical clients and sampled 10 clients per round, corresponding to a 12.5\% participation rate. Each selected client performed one local epoch with 32 optimizer steps, batch size 1, and gradient accumulation of 4. We used AdamW with learning rate \(2\times10^{-4}\) and weight decay \(10^{-2}\).

\noindent\textbf{LoRA and sparsity.}
All of the methods communicated LoRA adapter deltas rather than full model weights. The LoRA configuration used rank \(r=16\), scaling parameter \(\alpha=32\), dropout 0.05, automatic target-module selection, FP16 training, gradient checkpointing, and maximum sequence length 384. Dense baselines transmitted full-density LoRA deltas, whereas sparse methods applied top-\(k\) sparsification before transmission.

\noindent\textbf{Baselines and TERRA.}
Thermo-FL was compared against dense and sparse variants of FedAvg-LoRA, trimmed mean, coordinate-wise median, Krum, Multi-Krum, and Bulyan. Dense variants aggregated full-density LoRA adapter deltas, whereas sparse variants aggregated top-\(k\) sparsified LoRA adapter deltas. Thermo-FL used TERRA for sparse LoRA aggregation. Unless otherwise stated, TERRA used a MAD scaling factor of 2.5, cosine threshold \(-0.05\), warmup period of 2 rounds, clipping quantile 0.80, and FedAvg-style reduction after filtering.

\noindent\textbf{Thermal model.}
Each logical client maintained a persistent temperature state across rounds. Temperatures were initialized from \(\mathcal{N}(44,1.5^2)\), increased during local optimization, and cooled during idle periods. The controller thresholds were set to \(T_{\min}=45^\circ\mathrm{C}\) and \(T_{\max}=65^\circ\mathrm{C}\). The emulator used a cooling factor of 1.25 per round and a heating factor of 0.65 per local round. Temperature controlled both the active trainable-layer ratio \(\kappa \in [0.20,1.00]\) and the sparse transmission ratio \(\rho \in [0.05,0.50]\), so hotter clients trained fewer LoRA layers and transmitted sparser updates.

\noindent\textbf{Adversarial settings.}
We evaluated clean, sign-flip, and mixed attack settings. In adversarial runs, 20\% of clients were malicious, giving 16 malicious clients out of 80 and approximately two malicious clients among the 10 sampled clients per round. In the sign-flip setting, malicious clients reversed their update direction along with scaling \(\gamma=-3\). In the mixed setting, attacks were sampled from sign flip, Gaussian noise, scale attack with \(\gamma=-3\), random-mask corruption, model replacement, and ALIE-like perturbations. Attacks were applied after local training and client-side sparsification.

\noindent\textbf{Evaluation protocol.}
We periodically evaluated saved LoRA checkpoints using greedy generation on the held-out evaluation split. GSM8K answers were normalized by extracting the final numeric answer, while BoolQ outputs were normalized to yes/no form. We report exact-match accuracy after normalization.

\subsubsection{Results and Analysis}
\label{subsubsec:emulator_results}
We analyzed the emulator results along three dimensions: overall robustness and utility, the effect of sparse LoRA aggregation, and the contribution of individual TERRA components.

\begin{table*}[t]
\centering
\caption{TERRA ablation at round 25 using Qwen2.5-0.5B; exact-match accuracy (\%) is reported across tasks and attack settings.}
\label{tab:terra_ablation_round25}
\small
\setlength{\tabcolsep}{10pt}
\begin{tabular}{lcccccc}
\toprule
& \multicolumn{3}{c}{GSM8K} & \multicolumn{3}{c}{BoolQ} \\
\cmidrule(lr){2-4} \cmidrule(lr){5-7}
Method & Clean & Signflip & Mixed & Clean & Signflip & Mixed \\
\midrule
Sparse FedAvg                      & 32.90 & 26.69 & 29.57 & 69.39 & 63.46 & 62.63 \\
Mask-aware aggregation             & \textbf{34.57} & 25.40 & 2.35  & 66.79 & 62.72 & 61.59 \\
Mask + norm filtering              & 32.90 & 30.63 & 28.05 & 68.59 & 68.69 & 68.75 \\
Mask + norm + cosine validation    & 27.67 & 25.40 & 28.96 & 68.23 & 70.52 & 70.98 \\
Full TERRA                        & 33.74 & \textbf{31.09} & \textbf{31.67} & \textbf{72.32} & \textbf{71.16} & \textbf{72.11} \\
\bottomrule
\end{tabular}
\end{table*}

\noindent\textbf{Overall robustness and utility.}
Table~\ref{tab:round25_emulator_qwen05b} highlights the final exact-match accuracy at global round 25 for Qwen2.5-0.5B under clean, sign-flip, and mixed attack settings. Overall, Thermo-FL achieved the strongest performance on BoolQ across all three settings, reaching 72.32\% in the clean setting, 71.16\% under sign-flip attacks, and 72.11\% under mixed attacks. This result suggests that TERRA was able to preserve useful sparse learning signal while limiting the effect of corrupted updates.

On GSM8K, Thermo-FL remained competitive but was not uniformly the best method in every setting. It achieved 33.74\% in the clean setting, 31.09\% under sign-flip attacks, and 31.67\% under mixed attacks. Several robust baselines performed strongly in individual GSM8K settings, especially Multi-Krum and coordinate-wise median. However, Thermo-FL consistently improved over sparse FedAvg under adversarial settings, increasing accuracy from 26.69\% to 31.09\% under sign-flip attacks and from 29.57\% to 31.67\% under mixed attacks. These results indicate that TERRA is most useful as a stability mechanism for sparse adversarial aggregation, rather than as a universal accuracy maximizer across all tasks.

\rqtakeaway{Finding 1: Robust sparse aggregation}
{TERRA improved robustness under adversarial sparse LoRA aggregation while preserving competitive utility. The gains were strongest on BoolQ, where Thermo-FL achieved the highest accuracy across clean, sign-flip, and mixed settings. On GSM8K, Thermo-FL remained competitive and improved over sparse FedAvg under all settings.}

\noindent\textbf{Dense versus sparse aggregation.}
The comparison between dense and sparse baselines shows that sparsification did not necessarily degrade utility. Sparse FedAvg achieved accuracy close to dense FedAvg in the clean setting and, in several adversarial cases, performed better than its dense counterpart. This suggests that top-\(k\) LoRA sparsification can reduce communication while preserving enough update signal for downstream adaptation. However, sparse updates also introduce support heterogeneity, which makes naive aggregation more fragile under attack. Thermo-FL addresses this issue by combining sparse transmission with mask-aware filtering and aggregation.

\rqtakeaway{Finding 2: Sparse transmission did not collapse utility}
{Top-\(k\) LoRA sparsification preserved enough update signal for downstream adaptation, but sparse support heterogeneity made naive aggregation more fragile under attack. This motivates mask-aware robust aggregation rather than sparse FedAvg alone.}

\noindent\textbf{Effect of TERRA components.}
Table~\ref{tab:terra_ablation_round25} reports the ablation study for TERRA. The results show that mask-aware aggregation alone was not sufficient as a robustness mechanism. In particular, under the mixed attack on GSM8K, mask-aware aggregation collapsed to 2.35\%, showing that preserving sparse support does not protect the server when adversarial clients manipulate update values or supports. Adding norm filtering substantially improved robustness by removing high-magnitude poisoned updates. Directional validation further constrained updates that had plausible magnitude but harmful orientation, while adaptive clipping bounded the coordinate-level influence of updates that passed the first two filters.
The full TERRA pipeline achieved the best overall balance across tasks and attack settings. On BoolQ, full TERRA obtained the highest accuracy in the clean, sign-flip, and mixed settings. On GSM8K, it improved over sparse FedAvg in both adversarial settings and avoided the severe collapse observed with mask-aware aggregation alone. These results support the design choice of combining norm filtering, mask-aware directional validation, adaptive active-coordinate clipping, and mask-aware aggregation as complementary stages rather than independent alternatives.
\rqtakeaway{Finding 3: TERRA stages are complementary}
{Mask-aware aggregation alone was not sufficient for robustness. Norm filtering, directional validation, adaptive clipping, and mask-aware aggregation worked together to limit high-magnitude, misaligned, and coordinate-concentrated adversarial updates.}

\subsection{Physical Testbed}
\label{subsec:physical_testbed}

The physical testbed complemented the emulator by validating whether Thermo-FL could run end-to-end on real edge hardware. Unlike the emulator, this setting captured measured device temperature, runtime variability, compressed payload construction, and attack behavior under Jetson-class client constraints. Because only two physical clients were available, this study was used as a prototype validation of hardware behavior rather than as a large-scale robustness benchmark.

\subsubsection{Experimental Setup}
\label{subsubsec:physical_setup}

The physical prototype was configured to evaluate feasibility, thermal behavior, communication overhead, and representative attack response under realistic edge-device constraints.

\noindent\textbf{Model and dataset.}
We used Qwen2.5-0.5B as the base model and fine-tuned it with LoRA under 8-bit quantization. The pretrained backbone was frozen, and only the LoRA adapter parameters were updated. Training and evaluation were performed on GSM8K, using 7,473 training examples and 1,319 held-out test examples.

\noindent\textbf{Hardware testbed.}
The testbed consisted of two NVIDIA Jetson Orin Nano Developer Kits as edge clients and an Acer Aspire E15 laptop as the aggregation server. Each Jetson client had 8\,GB of RAM and a 1\,TB NVMe SSD. The server used an 8th-generation Intel Core i5 processor with 12\,GB of RAM.

\noindent\textbf{Data partitioning and training.}
The GSM8K training set was divided into two disjoint client subsets of 3,736 and 3,737 samples. All physical experiments were run for 200 communication rounds. In each round, both clients performed 5 local optimization steps using AdamW with learning rate \(3\times10^{-4}\), batch size 1, and gradient accumulation of 4. LoRA was applied to the query, key, value, and output projection matrices with rank \(r=8\) and scaling parameter \(\alpha=16\).

\noindent\textbf{Thermal control and telemetry.}
Device telemetry was sampled every \(1\,\mathrm{Hz}\) using \texttt{tegrastats}. Experiments were conducted in a climate-controlled laboratory at approximately \(25^\circ\mathrm{C}\). Devices were placed on a non-conducting flat surface without external cooling and relied only on their built-in cooling systems. The controller used \(T_{\min}=45^\circ\mathrm{C}\) and \(T_{\max}=65^\circ\mathrm{C}\). The sparse transmission ratio \(\rho\) was adjusted between 50\% and 1\% using a clipped linear policy. The trainable-layer ratio \(\kappa\) followed a three-level policy: \(\kappa=1.0\) for \(T<45^\circ\mathrm{C}\), \(\kappa=0.5\) for \(45^\circ\mathrm{C}\leq T\leq65^\circ\mathrm{C}\), and \(\kappa=0.1\) for \(T>65^\circ\mathrm{C}\).

\noindent\textbf{Communication encoding.}
We evaluated dense-full, dense-sparse, COO, bitmap, flat-index, flat-delta, and values-only encodings(ideal metrics). All schemes followed the same pipeline: payload construction, binary serialization, and zlib lossless compression before upload. Compressed upload size was used as the primary communication metric, while encoding and compression time were measured as secondary overhead.

\noindent\textbf{Baselines and scope.}
The physical testbed was used to study end-to-end hardware behavior rather than to reproduce the full emulator baseline comparison. We evaluated three configurations: \textit{FedAvg-full}, which averaged full-density LoRA adapter states with standard FedAvg; \textit{Thermo-FL without TERRA}, which used thermal-aware client control and sparse transmission with FedAvg-style aggregation; and \textit{Thermo-FL}, which used thermal-aware control, sparse encoding, and TERRA aggregation. In all cases, the pretrained backbone remained frozen, and only the LoRA adapter state was communicated or aggregated. Since clients started each round from the same global adapter state, averaging local adapter states was equivalent to averaging the corresponding adapter update differences.

\noindent\textbf{Adversarial settings.}
We evaluated clean execution, one internal Byzantine attack represented by sign-flip/scale, and one communication-layer perturbation. In the Byzantine setting, one client submitted sign-scaled updates with \(\gamma=-10\). In the MITM setting, Gaussian noise proportional to the update standard deviation was injected into transmitted updates. We did not use the mixed-attack setting on the physical testbed because only two clients participated; with such a small client population, randomly mixing several attack operators would reduce to a single-client perturbation rather than a meaningful distributional robustness test. Mixed attacks were therefore evaluated in the large-scale emulator, while the physical testbed isolated representative endpoint-level and communication-layer corruption cases.
\subsubsection{Results and Analysis}
\label{subsubsec:physical_results}
We evaluated the physical prototype along four axes, which are thermal behavior, system efficiency, communication overhead, and robustness under representative attacks.

\begin{figure}[t]
  \centering
  \includegraphics[width=.8\linewidth]{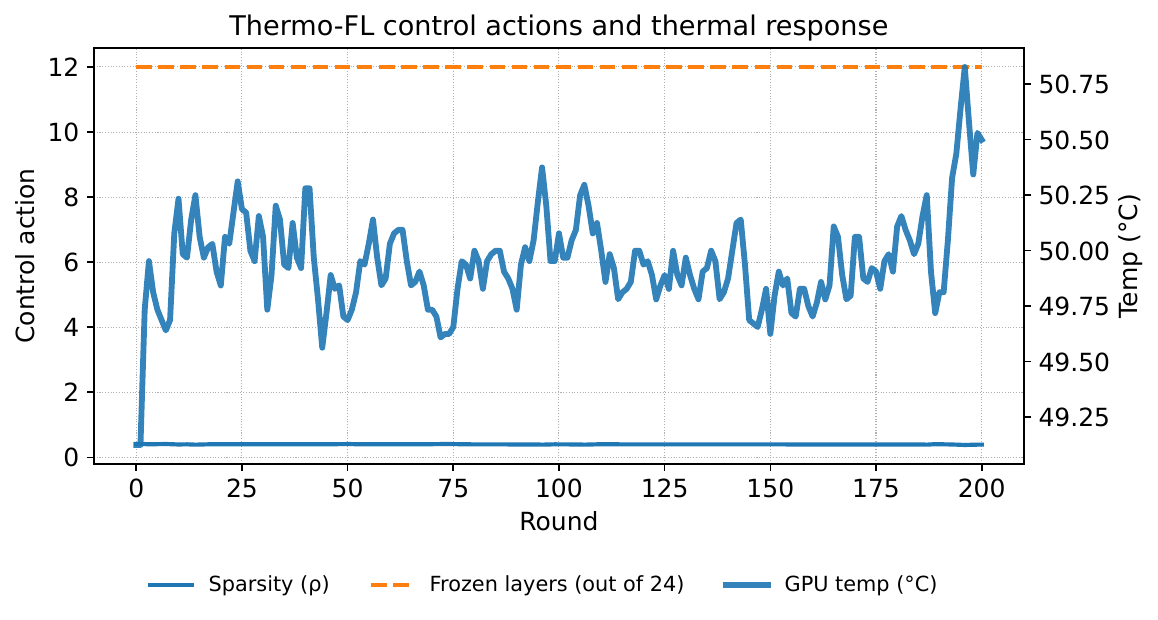}
  \caption{Thermal controller behavior across 200 communication rounds. Thermo-FL adjusted sparse transmission and trainable-layer control in response to measured GPU temperature.}
  \label{fig:rq1_control}
\end{figure}

\begin{figure}[t]
    \centering
    \begin{subfigure}[b]{0.48\linewidth}
        \centering
        \includegraphics[width=\linewidth]{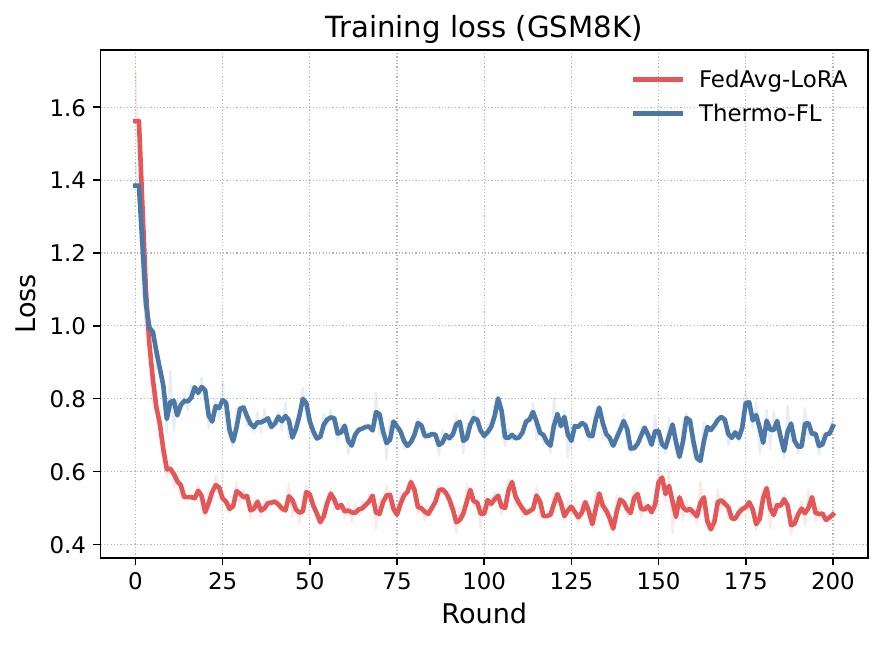}
        \caption{Training loss}
    \end{subfigure}
    \hfill
    \begin{subfigure}[b]{0.48\linewidth}
        \centering
        \includegraphics[width=\linewidth]{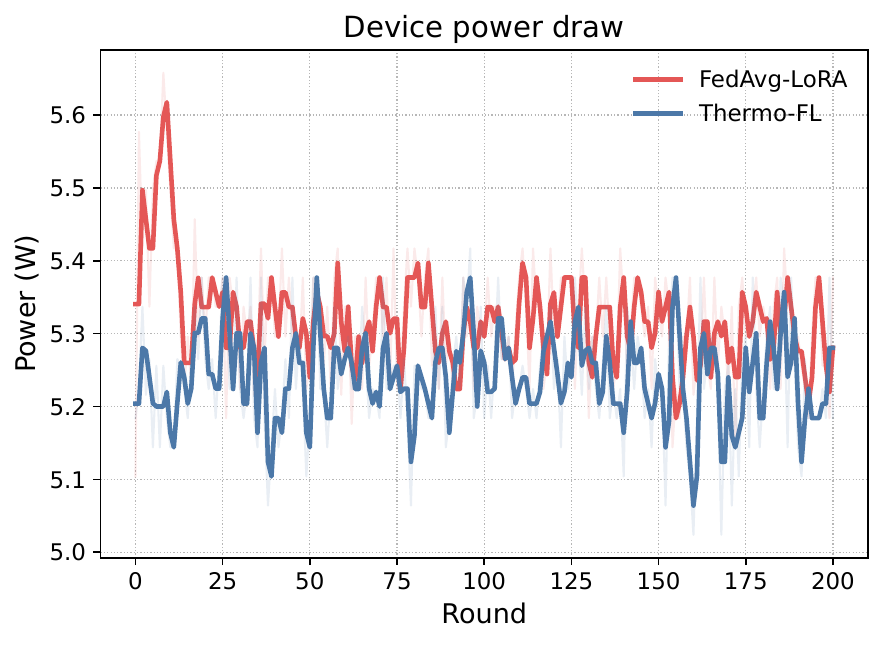}
        \caption{Power}
    \end{subfigure}

    \vspace{0.3em}

    \begin{subfigure}[b]{0.48\linewidth}
        \centering
        \includegraphics[width=\linewidth]{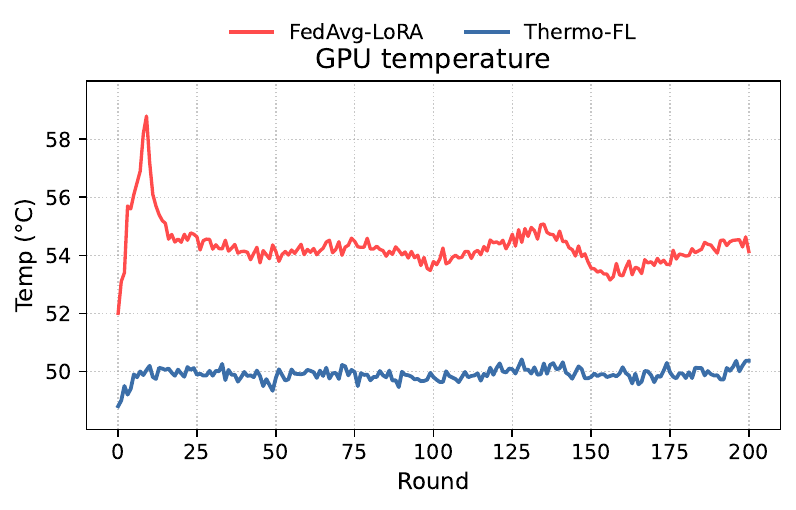}
        \caption{GPU temperature}
    \end{subfigure}
    \hfill
    \begin{subfigure}[b]{0.48\linewidth}
        \centering
        \includegraphics[width=\linewidth]{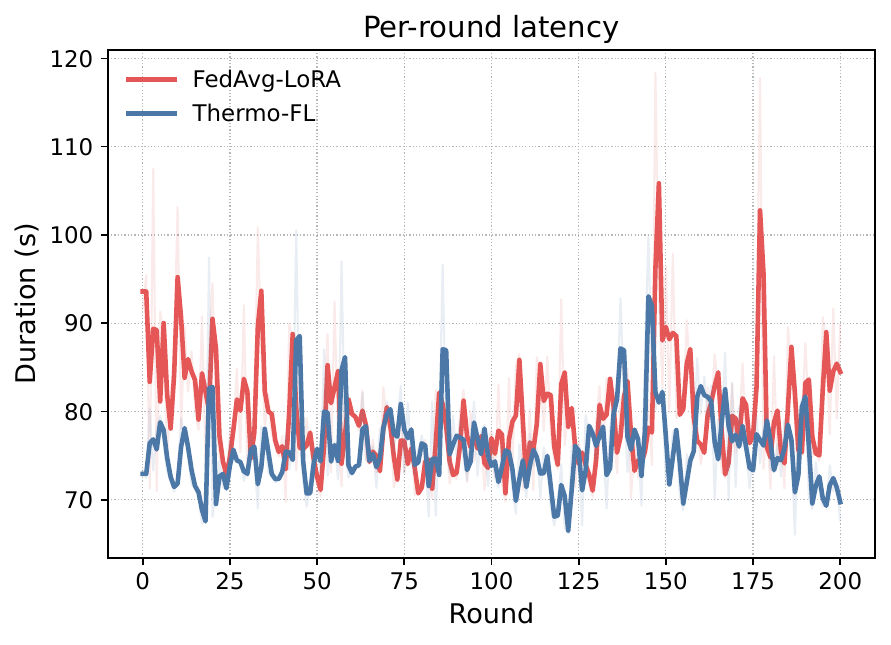}
        \caption{Round latency}
    \end{subfigure}

    \caption{Clean physical-testbed behavior for Thermo-FL and FedAvg-LoRA. The comparison includes training loss, power consumption, GPU temperature, and per-round latency.}
    \label{fig:physical_clean_behavior}
\end{figure}

\begin{figure}[t]
  \centering
  \begin{subfigure}{0.45\linewidth}
    \centering
    \includegraphics[width=\linewidth]{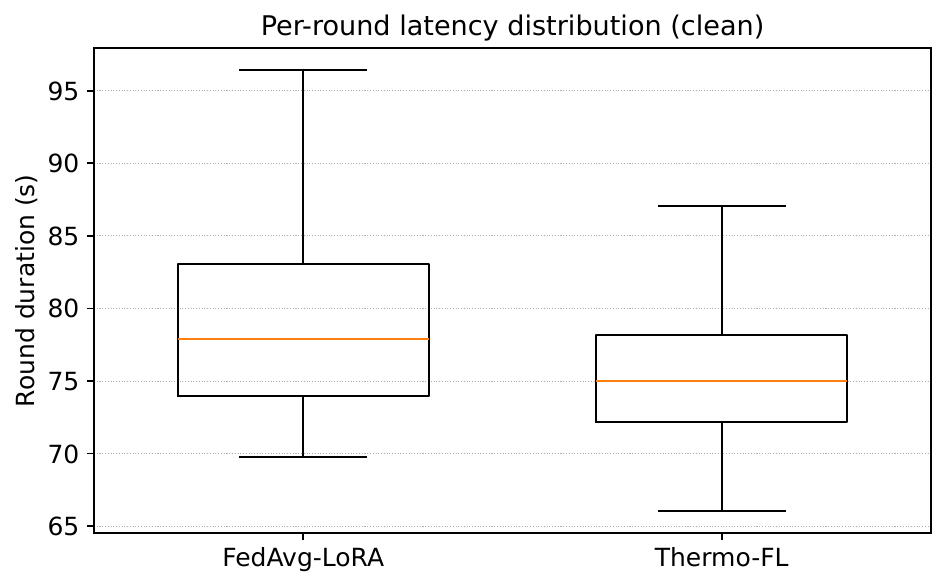}
    \caption{Latency}
  \end{subfigure}
  \hfill
  \begin{subfigure}{0.45\linewidth}
    \centering
    \includegraphics[width=\linewidth]{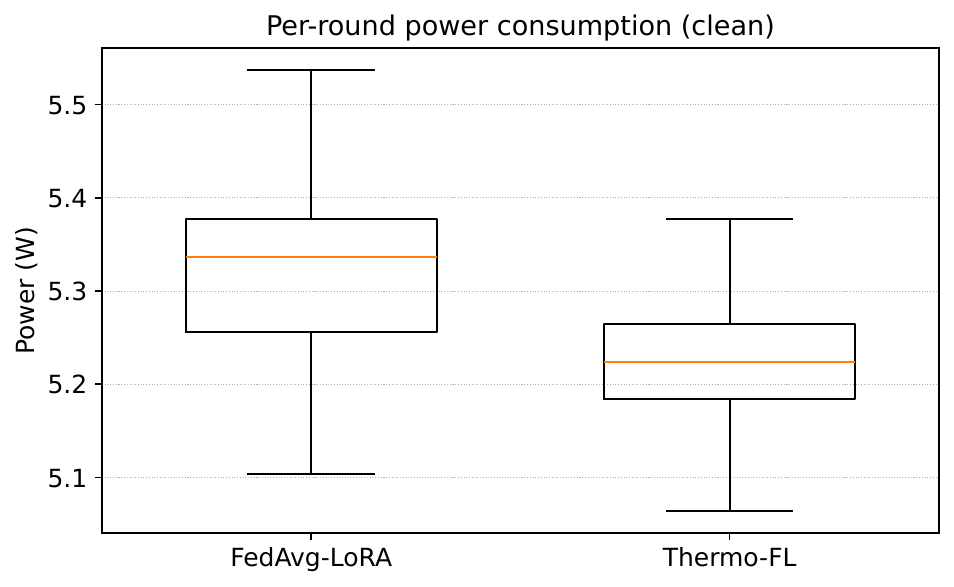}
    \caption{Power}
  \end{subfigure}
  \caption{Distribution of per-round latency and power consumption under clean physical execution.}
  \label{fig:physical_latency_power_dist}
\end{figure}

\noindent\textbf{Thermal behavior and system efficiency.}
Figure~\ref{fig:rq1_control} shows that Thermo-FL actively used measured temperature to regulate the physical training process. Across 200 communication rounds, the controller kept GPU temperature within a narrow range of roughly \(49\text{-}51^\circ\mathrm{C}\). As temperature increased, Thermo-FL reduced local training workload and transmitted sparser updates; when the device cooled, these constraints were relaxed. This confirms that temperature was not only logged but used as a feedback signal for federated fine-tuning.

Figure~\ref{fig:physical_clean_behavior} compares Thermo-FL with FedAvg-LoRA under clean execution. FedAvg-LoRA showed an early temperature spike near \(60^\circ\mathrm{C}\) and then operated around \(54\text{-}55^\circ\mathrm{C}\) for much of training. In contrast, Thermo-FL stabilized near \(50^\circ\mathrm{C}\), while also reducing power consumption and round-latency variation. Figure~\ref{fig:physical_latency_power_dist} further shows that Thermo-FL had lower median latency and a tighter latency distribution, indicating fewer thermally induced straggler effects and more predictable round completion.

\begin{figure}[t]
    \centering
    \begin{subfigure}[t]{\linewidth}
        \centering
        \includegraphics[width=.80\linewidth]{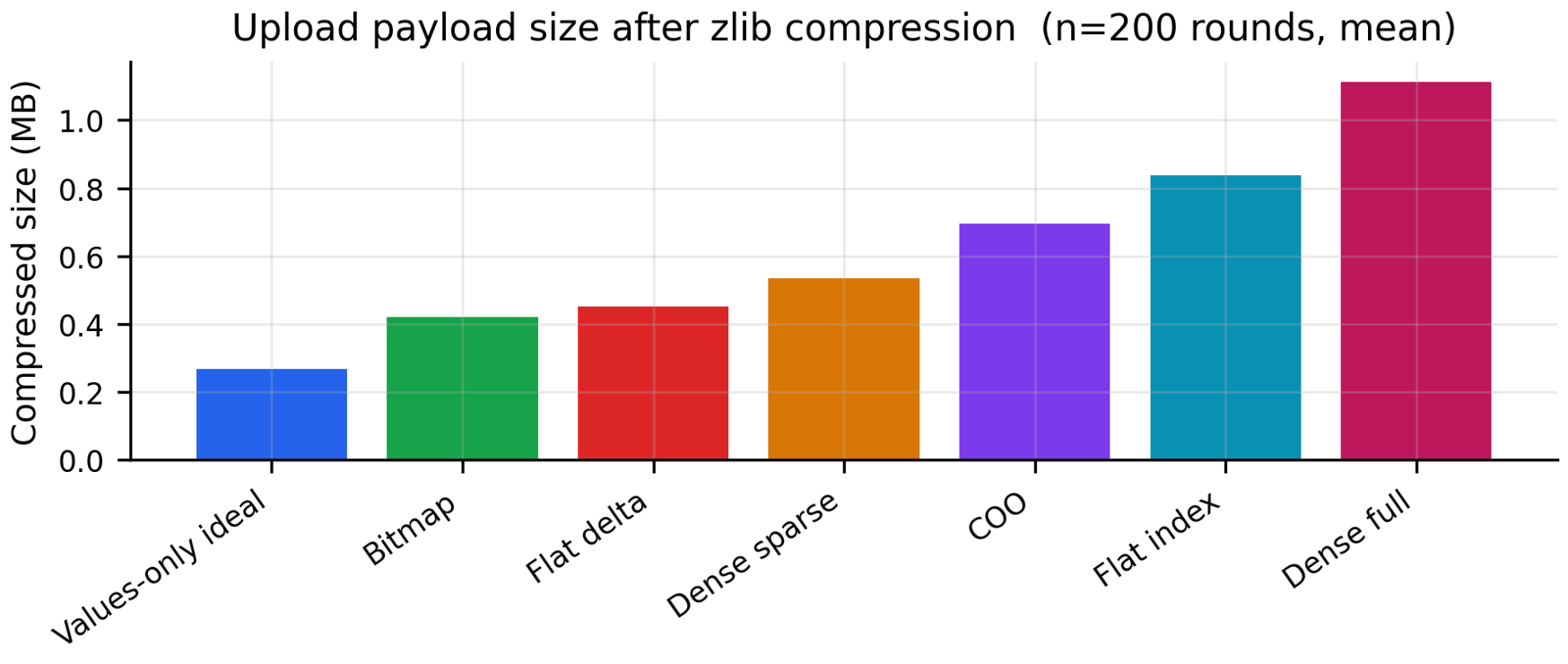}
        \caption{Mean compressed upload size after serialization and compression.}
        \label{fig:physical_payload_size}
    \end{subfigure}

    \vspace{0.4em}

    \begin{subfigure}[t]{\linewidth}
        \centering
        \includegraphics[width=.80\linewidth]{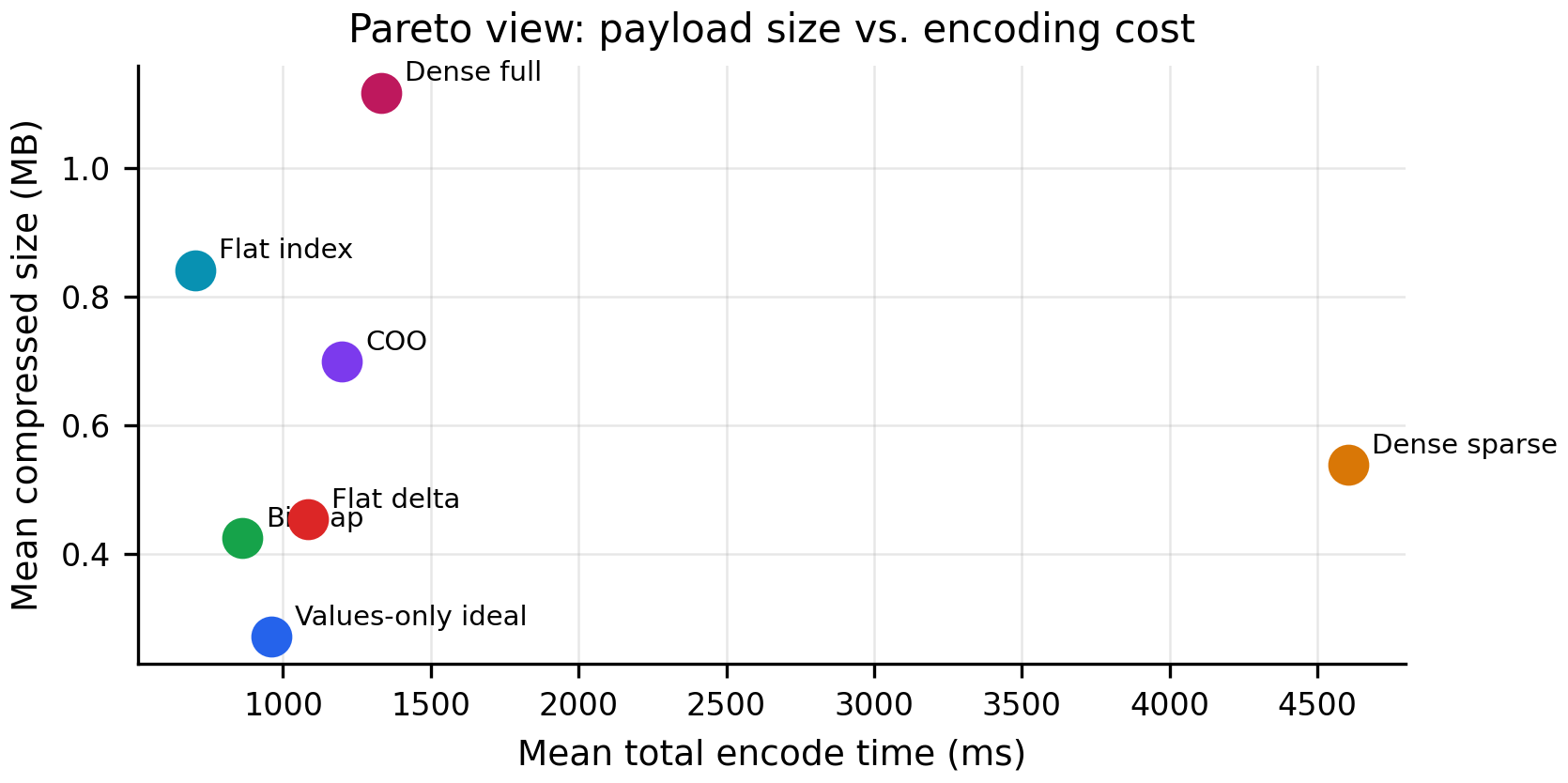}
        \caption{Compressed payload size versus encoding cost.}
        \label{fig:physical_payload_pareto}
    \end{subfigure}

    \caption{Communication-efficiency benchmark for sparse update encodings on the physical testbed.}
    \label{fig:physical_comm}
\end{figure}

\noindent\textbf{Communication overhead.}
Figure~\ref{fig:physical_comm} reports the communication benchmark for candidate sparse-update encodings. All schemes used the same pipeline: payload construction, binary serialization, and zlib lossless compression before upload. The raw dense LoRA payload was approximately \(2.23\,\mathrm{MB}\), while compressed dense-full transmission required about \(1.11\,\mathrm{MB}\). Among deployable sparse formats, bitmap achieved the smallest practical compressed payload at approximately \(0.42\,\mathrm{MB}\). Flat-delta was close at \(0.45\,\mathrm{MB}\), followed by dense-sparse at \(0.54\,\mathrm{MB}\), COO at \(0.65\,\mathrm{MB}\), and flat-index at \(0.84\,\mathrm{MB}\). The values-only format reached \(0.27\,\mathrm{MB}\), but it served only as an ideal lower bound because it assumes zero cost for support metadata. Overall, bitmap provided the best practical size-overhead tradeoff.

\begin{figure}[t]
  \centering
  \begin{subfigure}[t]{0.48\linewidth}
    \centering
    \includegraphics[width=\linewidth]{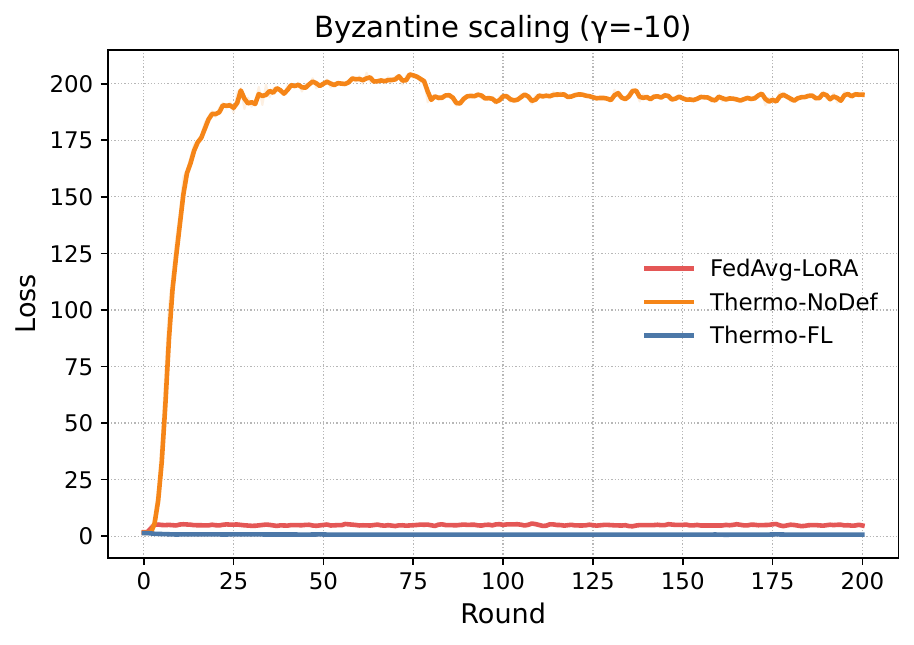}
    \caption{Sign-flip/scale attack \((\gamma=-10)\)}
    \label{fig:physical_byz_loss}
  \end{subfigure}
  \hfill
  \begin{subfigure}[t]{0.48\linewidth}
    \centering
    \includegraphics[width=\linewidth]{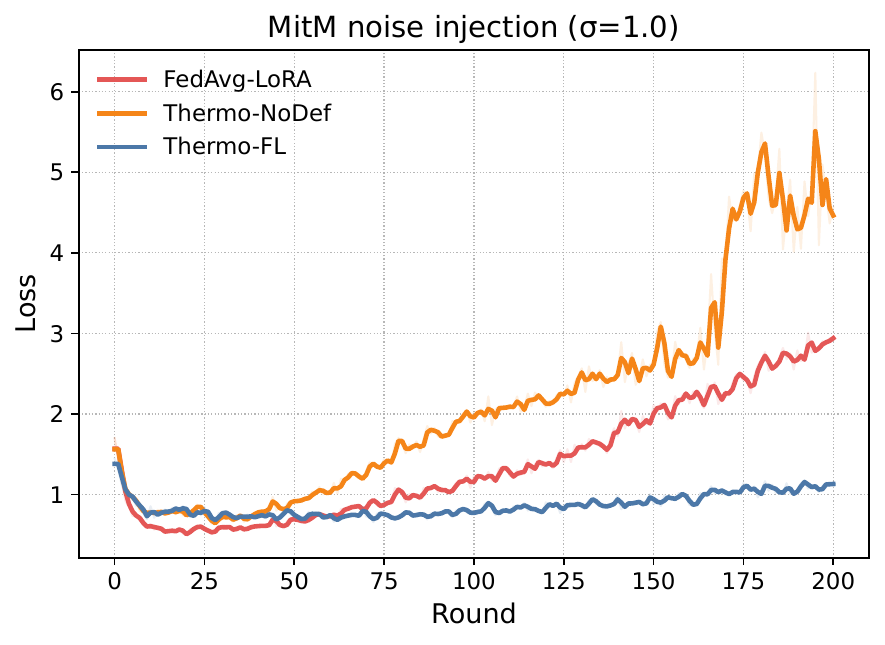}
    \caption{MITM noise injection \((\sigma=1.0)\)}
    \label{fig:physical_mitm_loss}
  \end{subfigure}

  \caption{Training loss under representative physical attack settings.}
  \label{fig:physical_attacks}
\end{figure}

\noindent\textbf{Robustness under physical attacks.}
Figure~\ref{fig:physical_attacks} shows the training loss under Byzantine scaling and MITM noise injection. Under the Byzantine scaling attack with \(\gamma=-10\), Thermo-FL without TERRA became unstable, whereas the full Thermo-FL pipeline remained stable. A similar pattern appeared under MITM noise injection: FedAvg-LoRA degraded after convergence, Thermo-FL without TERRA became unstable, and Thermo-FL maintained a stable trajectory. These results indicate that TERRA's norm filtering, directional validation, and adaptive clipping constrained corrupted updates even in the small-client physical setting.

\begin{table}[t]
\centering
\caption{Physical-testbed GSM8K exact-match accuracy (\%) under clean and adversarial settings. Higher is better.}
\label{tab:gsm8k_physical_utility}
\small
\begin{tabular}{lccc}
\toprule
\textbf{Method} & \textbf{Clean} & \textbf{Sign-flip/scale} & \textbf{MITM} \\
\midrule
FedAvg-LoRA & 23.28 & 0.00 & 0.00 \\
Thermo-FL without TERRA & -- & 0.00 & 0.00 \\
\textbf{Thermo-FL} & \textbf{24.64} & \textbf{21.40} & \textbf{18.65} \\
\bottomrule
\end{tabular}
\vspace{0.3ex}
\begin{minipage}{\columnwidth}
\footnotesize
\textit{Note:} The sign-flip/scale attack used \(\gamma=-10\). We used 8-Bit Quantized Qwen2.5-0.5B model.
\end{minipage}
\end{table}

\noindent\textbf{Downstream task utility.}
Table~\ref{tab:gsm8k_physical_utility} reports GSM8K exact-match accuracy on the physical testbed. In the clean setting, Thermo-FL achieved \(24.64\%\), slightly higher than FedAvg-LoRA at \(23.28\%\). Under attack, FedAvg-LoRA collapsed to \(0.00\%\), and Thermo-FL without TERRA also failed under both sign-flip/scale and MITM noise injection. In contrast, the full Thermo-FL pipeline preserved \(21.40\%\) accuracy under sign-flip/scale and \(18.65\%\) under MITM perturbation. These results show that the physical gains were not limited to smoother hardware behavior; TERRA was necessary to preserve downstream task utility when corrupted updates reached the server.

\rqtakeaway{Finding 4: Physical feasibility and robustness}
{Thermo-FL ran end-to-end on Jetson-class edge clients, stabilized device temperature, reduced compressed upload size through bitmap sparse encoding, and preserved GSM8K utility under representative sign-flip/scale and MITM attacks.}
\section{Discussion and Limitations}
\label{sec:discussion}
While Thermo-FL proves that thermal-aware client control and strong sparse aggregation techniques can be combined to enable edge LLM fine-tuning, there are still several limitations. First of all, the physical testbed used was intentionally small, consisting of only two Jetson Orin Nano as clients and one laptop server. Such architecture allowed validating end-to-end operation, telemetry measurement, payload compression, and attack simulation on actual hardware, but it was not intended to replace a large-scale emulator. Consequently, the emulator setup helped perform comparisons across larger client population, non-IID data partitions, and different adversary types.

Secondly, the physical attack assessment included only one sign-flip/scale attack and one attack based on MITM-style noise injection. These attacks were used to isolate representative endpoint-level and communication-layer corruption in a small-client deployment. The mixed attacks were only simulated in the emulator, where the number of clients was sufficient enough to make stochastic attack sampling meaningful. Third, TERRA pipeline of Thermo-FL framework does not eliminate the need for any cryptographic transport protection, client authentication, secure boot, or trusted sensing mechanisms. Rather, TERRA provides an aggregation-layer defense that limits the influence of corrupted sparse updates when such updates can penetrate the existing defences and reaches the server.

Last but not least, the temperature control policy of Thermo-FL used fixed temperature threshold and hardware-specific parameters for \(\kappa\) and \(\rho\). These values were appropriate for the Jetson-class prototype, but other edge devices may require calibration of those parameters and threshold based on their cooling capacity, power mode, workload intensity, and deployment environment. Future work can extend Thermo-FL with adaptive threshold selection, broader device heterogeneity, additional LLM tasks, and larger physical federated deployments.
\section{Related Work}
\label{sec:related_work}

\textbf{Federated learning (FL)} allows clients which are distributed across various location to collaboratively fine-tune or adapt shared models without centralizing raw data~\cite{mcmahan2023communicationefficientlearningdeepnetworks, khouas2024trainingmachinelearningmodels, openprobleminFL}. This makes FL an attractive solution for edge and on-device LLM settings, where local data may be private, domain-specific, or costly to transmit~\cite{EdgeLLMsurvey, xu2024ondevicelanguagemodelscomprehensive, xu2025surveyprivacysecuritymobile, ZhuSurvey}. Since full fine-tuning of LLMs poses a significant challenges due to their immense size and resource requirements, parameter-efficient fine-tuning (PEFT) methods, like  LoRA~\cite{hu2021loralowrankadaptationlarge}, prompt tuning~\cite{lester2021prompttuning}, and prefix tuning~\cite{li2021prefixtuningoptimizingcontinuousprompts}, have become central to efficient adaptation, which reduce the number of trainable parameters while preserving the adaptation capability. Recent federated LLM fine-tuning methods build on these techniques to support the collaborative adaptation across heterogeneous clients, and several works further reduces the communication through sparse or compressed LoRA updates~\cite{kuo2024federatedlorasparsecommunication, liu2025ecoloracommunicationefficientfederatedfinetuning, yan2026fedsrdsparsifyreconstructdecomposecommunicationefficientfederated}. These research improves the practicality of the federated LLM adaptation, but they generally treat efficiency as an algorithmic objective rather than as a response to the changing hardware states.

\textbf{Edge deployment of LLM introduces hardware dynamics and limitation that are difficult to capture with the conventional FL abstractions.} Edge client devices usually operate under the limited power, constrained cooling, battery dependence, and dynamic voltage frequency scaling (DVFS), which can affect the runtime, throughput, and overall efficiency of the devices~\cite{HungDVFS, GadeDVFS, HanimaiahDVFS, HanimaiahDVFSPerformance, LeDVFS, LuiDVFS}, which in-turn increases the time required to fine-tune the models in the federated learning setup. Thus in synchronous FL, such runtime variation can delay the aggregation and reduce the overall system efficiency, especially when some clients become stragglers or participate inconsistently across rounds~\cite{li2020federatedoptimizationheterogeneousnetworks, haddadpour2019convergencelocaldescentmethods, ZhuSurvey}. Prior thermal-aware and DVFS-aware systems regulate workloads, voltage-frequency states, or operating system-scheduling policies to improve the energy efficiency and avoid throttling~\cite{HungDVFS, GadeDVFS, LuiDVFS}. These approaches are often very essential in the edge devices and federated fine-tuning systems, but they typically treat temperature as a runtime-management variable instead of a signal that shapes the federated fine-tuning behaviour.

\textbf{Federated learning is also vulnerable to the unreliable and malicious update}, caused by compromised clients or communication channels. Byzantine clients, model poisoning, backdoor attacks, and compromised communication channels can corrupt the updates which are used to form the global model~\cite{bagdasaryanBackdoorFL, BhagojiFLAdverserial, jimenezgutierrez2025securityprivacyfederatedlearning, ContiMITM}. Existing Robust aggregation methods reduces this risk of corrupted global model by rejecting abnormal updates or limiting the influence of co-ordinates which are outliers. Krum and Multi-Krum selects the updates using the distance based consistency~\cite{KRUM, multiKrum}. Bulyan combines the selection with co-ordinate wise consistency aggregation~\cite{bulyan}, whereas co-ordinate wise median and trimmed mean methods reduces the effect of co-ordinate level outliers~\cite{Median_Mean}. These defenses provides important method to protect the global model from being corrupted in the adversarial FL setup, but they are usually designed for the dense or structurally comparable updates.  However, this paper studies a setting where client-side hardware constraints can change participation and update structure, requiring thermal-aware client adaptation and robust aggregation to be considered together in a single framework.

To sum it up, prior works have made substantial progress in the field of federated LLM adaptation, thermal-aware edge execution, and robust aggregation. However, these fields are typically studied separately. This leaves an open problem for edge LLM fine-tuning, how to adapt client-side training to hardware constraints while maintaining server-side robustness against unreliable or adversarial updates.

\section{Conclusion}
\label{sec:conclusion}
In this research work, we proposed Thermo-FL, a thermal-aware federated fine-tuning framework for large language models running on edge clients. In the proposed framework, device temperature acts as a control signal regulating local LoRA training and sparse update transmission, thus lowering the client workload under thermal stress. In order to protect aggregation mechanism under dynamically sparse and adversarial updates, TERRA was introduced as a robust aggregation pipeline that combines norm filtering, mask-aware directional validation, adaptive active-coordinate clipping, and mask-aware aggregation.

Thermo-FL was evaluated using both a large-scale emulator and a physical Jetson-based testbed. The emulator showed improved robustness under adversarial sparse aggregation while preserving competitive utility, especially on BoolQ. The physical prototype further demonstrated end-to-end feasibility on real edge hardware, with stabilized device temperature, reduced compressed upload size through bitmap sparse encoding, and preserved GSM8K utility under sign-flip/scale and MITM perturbations. Overall, these results suggest that secure edge LLM adaptation should treat hardware behavior, sparse communication, and aggregation robustness as coupled design requirements rather than separate system concerns.

\newpage
\bibliographystyle{IEEEtran}
\bibliography{references}

\newpage
\appendix
\section{Convergence Rationale Under an Abstract TERRA Operator}
\label{app:convergence}
This appendix provides a compact convergence rationale for Thermo-FL under thermally regulated local computation, temperature-driven sparsification, and server-side TERRA aggregation. The implemented TERRA pipeline is data-dependent: the accepted client set, clipping threshold, and coordinate supports vary by round. A full first-principles convergence proof for the exact implementation would therefore require strong assumptions on the evolution of these data-dependent quantities. Instead, we analyze a faithful abstraction in which the implemented TERRA aggregate satisfies explicit alignment and bounded-moment conditions. This is the standard role of the appendix: to clarify the optimization behavior implied by the design, not to claim unconditional convergence under arbitrary Byzantine behavior.

\subsection{Notation Bridge}

The main text writes the server update as
\begin{equation}
W^{(t+1)} = W^{(t)} + \bar{\Delta}^{(t)} ,
\label{eq:app_server_update}
\end{equation}
where \(\bar{\Delta}^{(t)}\) is the aggregate LoRA delta produced by TERRA. For the convergence argument, we define the associated descent direction
\begin{equation}
G^{(t)}
:=
-\frac{1}{\eta}\bar{\Delta}^{(t)} ,
\label{eq:app_descent_direction}
\end{equation}
so that Eq.~\eqref{eq:app_server_update} can be written equivalently as
\begin{equation}
W^{(t+1)}
=
W^{(t)}-\eta G^{(t)} .
\label{eq:app_descent_update}
\end{equation}

We consider the weighted federated objective
\begin{equation}
J(W)=\sum_{k=1}^{K}p_kF_k(W),
\qquad
\sum_{k=1}^{K}p_k=1,
\label{eq:app_objective}
\end{equation}
where \(F_k\) is the local empirical risk of client \(k\).

At round \(t\), a selected client \(k\in S_t\) computes a stochastic gradient \(g_k^{(t)}\), applies the layer-freezing projection induced by \(\kappa_k^{(t)}\), and then applies temperature-driven top-\(k\) sparsification:
\begin{equation}
v_k^{(t)}
=
S_{\rho_k^{(t)}}
\!\left(
P_{\kappa_k^{(t)}}g_k^{(t)}
\right).
\label{eq:app_projected_sparse_direction}
\end{equation}
The corresponding sparse LoRA delta is
\begin{equation}
\tilde{\Delta}_k^{(t)}
=
-\eta v_k^{(t)} .
\label{eq:app_sparse_delta}
\end{equation}
Let \(m_k^{(t)}\in\{0,1\}^{d}\) denote the decoded support mask. Consistent with the main text, the decoded sparse update object is
\begin{equation}
u_k^{(t)}
=
\left(
\tilde{\Delta}_k^{(t)},m_k^{(t)}
\right),
\qquad
U^{(t)}
=
\{u_k^{(t)}:k\in S_t\}.
\label{eq:app_decoded_update_set}
\end{equation}
The implemented TERRA aggregate is written as
\begin{equation}
\bar{\Delta}^{(t)}
=
\mathcal{F}_{\mathrm{TERRA}}^{(t)}(U^{(t)}),
\label{eq:app_terra_operator}
\end{equation}
where \(\mathcal{F}_{\mathrm{TERRA}}^{(t)}\) includes norm filtering, mask-aware directional validation, adaptive active-coordinate clipping, and mask-aware aggregation.

\subsection{Assumptions}

\paragraph{Smoothness.}
The global objective \(J\) is \(L\)-smooth:
\begin{equation}
\|\nabla J(x)-\nabla J(y)\|_2
\leq
L\|x-y\|_2 ,
\qquad
\forall x,y .
\label{eq:app_smoothness}
\end{equation}

\paragraph{Stochastic gradients.}
For each client \(k\), the stochastic gradient is unbiased with bounded variance:
\begin{equation}
\mathbb{E}[g_k^{(t)}(W)]
=
\nabla F_k(W),
\qquad
\mathbb{E}\!\left[
\|g_k^{(t)}(W)-\nabla F_k(W)\|_2^2
\right]
\leq
\sigma^2 .
\label{eq:app_gradient_assumption}
\end{equation}

\paragraph{Thermal control bounds.}
The thermal controller keeps the active-layer and transmission ratios within fixed positive ranges:
\begin{equation}
\kappa_k^{(t)}\geq \kappa_{\min}>0,
\qquad
\rho_{\min}\leq \rho_k^{(t)}\leq \rho_{\max}\leq 1 .
\label{eq:app_thermal_bounds}
\end{equation}
The layer-freezing projection may introduce a bounded bias:
\begin{equation}
\left\|
\nabla J(W^{(t)})
-
P_{\kappa_k^{(t)}}\nabla J(W^{(t)})
\right\|_2^2
\leq
\epsilon_{\kappa_k^{(t)}},
\qquad
\sup_{t,k}\epsilon_{\kappa_k^{(t)}}\leq \bar{\epsilon}.
\label{eq:app_projection_bias}
\end{equation}

\paragraph{Sparsification.}
The top-\(k\) sparsifier preserves a nontrivial fraction of the update energy. For any vector \(x\),
\begin{equation}
\langle x,S_{\rho}(x)\rangle
\geq
\rho\|x\|_2^2,
\qquad
\|S_{\rho}(x)\|_2^2
\leq
\|x\|_2^2 .
\label{eq:app_sparsifier_property}
\end{equation}

\paragraph{Abstract TERRA alignment.}
There exist constants \(\alpha_{\mathrm{T}}\in(0,1]\) and \(\varepsilon_{\mathrm{T}}\geq0\) such that the descent direction produced by the implemented TERRA aggregate satisfies
\begin{equation}
\mathbb{E}
\left[
\left\langle
\nabla J(W^{(t)}),G^{(t)}
\right\rangle
\right]
\geq
\alpha_{\mathrm{T}}\rho_{\min}
\|\nabla J(W^{(t)})\|_2^2
-
(\bar{\epsilon}+\varepsilon_{\mathrm{T}}).
\label{eq:app_alignment}
\end{equation}
Here, \(\varepsilon_{\mathrm{T}}\) captures the residual bias introduced by robust filtering, clipping, and adversarial contamination that survives the filters.

\paragraph{Abstract TERRA second moment.}
There exist constants \(C_{\mathrm{T}}\geq1\) and \(\nu_{\mathrm{T}}\geq0\) such that
\begin{equation}
\mathbb{E}
\left[
\|G^{(t)}\|_2^2
\right]
\leq
C_{\mathrm{T}}
\left(
\|\nabla J(W^{(t)})\|_2^2+\sigma^2
\right)
+
\nu_{\mathrm{T}} .
\label{eq:app_second_moment}
\end{equation}
The term \(\nu_{\mathrm{T}}\) captures the residual second-moment contribution induced by filtering, clipping, and surviving adversarial components.

\subsection{Bounded Influence of the Implemented TERRA Stages}

The abstract constants above are not assumed to arise magically. The implemented TERRA stages provide bounded-influence behavior that supports these assumptions. In particular, norm filtering rejects updates whose sparse-delta norm exceeds the round-adaptive threshold \(B_t=m_s^{(t)}+\lambda_{\mathrm{norm}}\operatorname{MAD}^{(t)}\). Thus, a scaling or model-replacement update \(\tilde{\Delta}_a^{(t)}=\gamma\Delta_a^{(t)}\) is removed whenever
\begin{equation}
|\gamma|\,\|\Delta_a^{(t)}\|_2>B_t .
\label{eq:app_scaling_rejection}
\end{equation}

Mask-aware directional validation rejects updates whose alignment with the reference direction \(r^{(t)}\), measured only on the transmitted support, falls below \(\tau_{\mathrm{dir}}\). Hence sign-flipped updates that reverse the direction of an otherwise aligned sparse update are rejected when their support-restricted cosine score falls below the threshold.

For any update that survives filtering, adaptive clipping ensures
\begin{equation}
|\hat{\Delta}_{k,j}^{(t)}|
\leq
m_{k,j}^{(t)}q_t ,
\label{eq:app_coordinate_clip}
\end{equation}
and if at most \(\rho_{\max}d\) coordinates are transmitted,
\begin{equation}
\|\hat{\Delta}_k^{(t)}\|_2
\leq
q_t\sqrt{\rho_{\max}d}.
\label{eq:app_clipped_norm_bound}
\end{equation}
Finally, mask-aware aggregation bounds the coordinate-wise contribution of retained adversarial clients. Let
\begin{equation}
D_j^{(t)}
=
\sum_{k\in S_{\mathrm{dir}}^{(t)}}
w_km_{k,j}^{(t)}
+
\epsilon
\label{eq:app_active_mass}
\end{equation}
be the active aggregation mass for coordinate \(j\), and let
\[
A_j^{(t)}
=
\{a\in A_t\cap S_{\mathrm{dir}}^{(t)}:m_{a,j}^{(t)}=1\}
\]
be the retained adversarial clients that transmitted coordinate \(j\). Since clipping gives \(|\hat{\Delta}_{a,j}^{(t)}|\leq q_t\), their aggregate contribution satisfies
\begin{equation}
\left|
\frac{
\sum_{a\in A_j^{(t)}}
w_a m_{a,j}^{(t)}\hat{\Delta}_{a,j}^{(t)}
}{
D_j^{(t)}
}
\right|
\leq
\frac{
\sum_{a\in A_j^{(t)}} w_a
}{
D_j^{(t)}
}
q_t .
\label{eq:app_adversarial_contribution}
\end{equation}
This does not prove that every malicious update is removed. It shows that any retained adversarial influence is bounded by the clipping threshold and the adversarial active aggregation weight on each coordinate. This bounded-influence property is the implementation-level rationale behind the abstract residual terms \(\varepsilon_{\mathrm{T}}\) and \(\nu_{\mathrm{T}}\).

\subsection{Convergence Result}

\begin{theorem}[Stationary-point bound under abstract TERRA]
\label{thm:app_convergence}
Assume Eqs.~\eqref{eq:app_smoothness}--\eqref{eq:app_second_moment}. If
\begin{equation}
\eta
\leq
\frac{\alpha_{\mathrm{T}}\rho_{\min}}{L C_{\mathrm{T}}},
\label{eq:app_stepsize}
\end{equation}
then the iterates generated by Thermo-FL satisfy
\begin{align}
\frac{1}{T}
\sum_{t=1}^{T}
\mathbb{E}
\left[
\|\nabla J(W^{(t)})\|_2^2
\right]
\leq\;&
\frac{
2\left(\mathbb{E}[J(W^{(1)})]-J^\star\right)
}{
\eta\alpha_{\mathrm{T}}\rho_{\min}T
}
\nonumber\\
&+
\frac{
L\eta C_{\mathrm{T}}\sigma^2
}{
\alpha_{\mathrm{T}}\rho_{\min}
}
+
\frac{
2(\bar{\epsilon}+\varepsilon_{\mathrm{T}})
}{
\alpha_{\mathrm{T}}\rho_{\min}
}
+
\frac{
L\eta\nu_{\mathrm{T}}
}{
\alpha_{\mathrm{T}}\rho_{\min}
}.
\label{eq:app_main_bound}
\end{align}
\end{theorem}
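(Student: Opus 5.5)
We prove this with the standard nonconvex SGD descent-lemma argument, applied to the reformulated update $W^{(t+1)}=W^{(t)}-\eta G^{(t)}$ from Eq.~\eqref{eq:app_descent_update}. All TERRA-specific structure enters only through the abstract alignment condition \eqref{eq:app_alignment} and the second-moment condition \eqref{eq:app_second_moment}. The thermal quantities $\kappa_k^{(t)}$ and $\rho_k^{(t)}$, the sparsifier, and the filtering stages never need to be handled directly, since their effect is already absorbed into $\alpha_{\mathrm{T}}$, $\rho_{\min}$, $\bar{\epsilon}$, $\varepsilon_{\mathrm{T}}$, $C_{\mathrm{T}}$ and $\nu_{\mathrm{T}}$.

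\emph{Step 1: one-step descent.} By $L$-smoothness \eqref{eq:app_smoothness},
\[
J(W^{(t+1)})\le J(W^{(t)})-\eta\langle\nabla J(W^{(t)}),G^{(t)}\rangle+\tfrac{L\eta^2}{2}\|G^{(t)}\|_2^2 .
\]
We take expectation conditional on $W^{(t)}$. Writing $a:=\alpha_{\mathrm{T}}\rho_{\min}$ and $g_t:=\|\nabla J(W^{(t)})\|_2^2$, the alignment condition bounds the inner-product term and the second-moment condition bounds the quadratic term, giving
\[
\mathbb{E}J(W^{(t+1)})\le J(W^{(t)})-\eta a\, g_t+\eta(\bar{\epsilon}+\varepsilon_{\mathrm{T}})+\tfrac{L\eta^2}{2}\big(C_{\mathrm{T}}(g_t+\sigma^2)+\nu_{\mathrm{T}}\big).
\]

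\emph{Step 2: absorb the gradient term.} The step-size condition \eqref{eq:app_stepsize} gives $L\eta C_{\mathrm{T}}\le a$. Hence $\tfrac{L\eta^2}{2}C_{\mathrm{T}}g_t\le \tfrac{\eta a}{2}g_t$, and the net coefficient on $g_t$ is at most $-\tfrac{\eta a}{2}$. Rearranging,
\[
\tfrac{\eta a}{2}\,\mathbb{E}g_t\le \mathbb{E}J(W^{(t)})-\mathbb{E}J(W^{(t+1)})+\eta(\bar{\epsilon}+\varepsilon_{\mathrm{T}})+\tfrac{L\eta^2}{2}(C_{\mathrm{T}}\sigma^2+\nu_{\mathrm{T}}).
\]

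\emph{Step 3: telescope.} We sum over $t=1,\dots,T$ using the tower property, bound $\mathbb{E}J(W^{(T+1)})\ge J^\star$ (here $J^\star:=\inf J$ is assumed finite), and divide by $\eta aT/2$. This yields
\[
\frac{1}{T}\sum_{t=1}^{T}\mathbb{E}g_t\le \frac{2(\mathbb{E}J(W^{(1)})-J^\star)}{\eta aT}+\frac{2(\bar{\epsilon}+\varepsilon_{\mathrm{T}})}{a}+\frac{L\eta(C_{\mathrm{T}}\sigma^2+\nu_{\mathrm{T}})}{a},
\]
which is exactly the four terms of \eqref{eq:app_main_bound}.

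\emph{Main obstacle.} The algebra itself is routine. The delicate point is the conditioning in Steps 1 and 2. Assumptions \eqref{eq:app_alignment} and \eqref{eq:app_second_moment} are written with $\|\nabla J(W^{(t)})\|^2$ outside the expectation, so we must either read them conditionally on $W^{(t)}$ or read them with $\mathbb{E}\|\nabla J(W^{(t)})\|^2$ on the right-hand side. Under either reading, the descent inequality holds in total expectation after applying the tower property. We will state explicitly that we adopt the conditional reading, taking the filtration generated by $W^{(1)},\dots,W^{(t)}$, because the randomness in client sampling, stochastic gradients, attacks and TERRA's data-dependent thresholds is drawn within round $t$ given $W^{(t)}$. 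The remaining hypotheses (unbiasedness \eqref{eq:app_gradient_assumption}, the thermal bounds, the projection bias, and the sparsifier property) are not invoked directly. They serve only to justify the form of the abstract constants, which the appendix has already argued heuristically.
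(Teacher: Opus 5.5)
Your proposal is correct and follows the paper's proof essentially line for line. It applies the $L$-smoothness descent inequality and substitutes the alignment and second-moment conditions. It then absorbs the $\tfrac{L\eta^2}{2}C_{\mathrm{T}}\|\nabla J(W^{(t)})\|_2^2$ term via the step-size condition, leaving a coefficient of $-\tfrac{\eta\alpha_{\mathrm{T}}\rho_{\min}}{2}$, and finally telescopes with $\mathbb{E}[J(W^{(T+1)})]\geq J^\star$ and divides by $T\eta\alpha_{\mathrm{T}}\rho_{\min}/2$. Two points that the paper leaves implicit when it simply ``takes expectations'' are made precise in your version: you read the abstract assumptions conditionally on $W^{(t)}$ and use the tower property, and you take $J^\star=\inf J$ to be finite.
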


\begin{proof}
By \(L\)-smoothness and the update \(W^{(t+1)}=W^{(t)}-\eta G^{(t)}\),
\begin{equation}
J(W^{(t+1)})
\leq
J(W^{(t)})
-
\eta
\left\langle
\nabla J(W^{(t)}),G^{(t)}
\right\rangle
+
\frac{L\eta^2}{2}
\|G^{(t)}\|_2^2 .
\label{eq:app_smooth_step}
\end{equation}
Taking expectations and applying the alignment and second-moment assumptions gives
\begin{align}
\mathbb{E}[J(W^{(t+1)})]
\leq\;&
\mathbb{E}[J(W^{(t)})]
-
\eta\alpha_{\mathrm{T}}\rho_{\min}
\mathbb{E}
\left[
\|\nabla J(W^{(t)})\|_2^2
\right]
+
\eta(\bar{\epsilon}+\varepsilon_{\mathrm{T}})
\nonumber\\
&+
\frac{L\eta^2}{2}
\left[
C_{\mathrm{T}}
\left(
\mathbb{E}
\left[
\|\nabla J(W^{(t)})\|_2^2
\right]
+\sigma^2
\right)
+
\nu_{\mathrm{T}}
\right].
\end{align}
Rearranging and using the stepsize condition in Eq.~\eqref{eq:app_stepsize},
\begin{align}
\mathbb{E}[J(W^{(t+1)})]
\leq\;&
\mathbb{E}[J(W^{(t)})]
-
\frac{\eta\alpha_{\mathrm{T}}\rho_{\min}}{2}
\mathbb{E}
\left[
\|\nabla J(W^{(t)})\|_2^2
\right]
\nonumber\\
&+
\eta(\bar{\epsilon}+\varepsilon_{\mathrm{T}})
+
\frac{L\eta^2}{2}
\left(
C_{\mathrm{T}}\sigma^2+\nu_{\mathrm{T}}
\right).
\end{align}
Summing over \(t=1,\ldots,T\), telescoping, using \(\mathbb{E}[J(W^{(T+1)})]\geq J^\star\), and dividing by \(T\eta\alpha_{\mathrm{T}}\rho_{\min}/2\) yields Eq.~\eqref{eq:app_main_bound}.
\end{proof}

\subsection{Interpretation}

The bound in Eq.~\eqref{eq:app_main_bound} has four terms. The first term decays as \(\mathcal{O}(1/T)\). The second term is the stochastic-gradient variance floor. The third term captures the bias introduced by thermal projection and by the abstract TERRA residual. The fourth term captures the second-moment contribution of the TERRA operator. Thus, Thermo-FL converges to a stationary neighborhood whose radius depends on stochastic noise, thermal adaptation, sparsification, and robust aggregation.

The theorem should not be read as an unconditional proof of convergence for arbitrary attacks. Rather, it states that if the implemented TERRA pipeline preserves sufficient alignment with the global descent direction and keeps the aggregate second moment bounded, then Thermo-FL retains the standard nonconvex stationary-point behavior expected of stochastic federated optimization. The bounded-influence inequalities above explain why the TERRA stages support these abstract conditions: high-magnitude updates are filtered, directionally inconsistent updates are rejected, retained coordinates are clipped, and sparse supports are aggregated without treating missing coordinates as zeros.
\end{document}